\documentclass{article}
\usepackage{iclr2026_conference,times}
\iclrfinalcopy 


\usepackage{amsmath,amsfonts,bm}









\def\eqref#1{equation~\ref{#1}}









\def\1{\bm{1}}










\DeclareMathAlphabet{\mathsfit}{\encodingdefault}{\sfdefault}{m}{sl}
\SetMathAlphabet{\mathsfit}{bold}{\encodingdefault}{\sfdefault}{bx}{n}













\usepackage{hyperref}
\usepackage{url}
\usepackage{amsmath}
\usepackage{amssymb}
\usepackage{amsthm}
\usepackage{bm}
\usepackage{float}
\usepackage{graphicx}
\usepackage[most]{tcolorbox}
\usepackage{wrapfig}
\usepackage{booktabs}
\usepackage{natbib}
\usepackage{hyperref}
\usepackage{booktabs}
\usepackage{subcaption}
\usepackage{multirow}
\usepackage{booktabs}
\usepackage{xcolor}
\usepackage{colortbl}
\usepackage{multirow}

\usepackage[ruled,vlined]{algorithm2e}

\usepackage[table]{xcolor}

\newtheorem{theorem}{Theorem}
\newtheorem{lemma}{Lemma}
\newtheorem{proposition}{Proposition}

\NewDocumentCommand{\xzg}
{ mO{} }{\textcolor{orange}{\textsuperscript{\textit{lyx}}\textsf{\textbf{\small[#1]}}}}

\usepackage{xspace}
\newcommand{\mname}{ASFT\xspace}

\title{Anchored Supervised Fine-Tuning}

\author{%
He Zhu$^{1,2}$\thanks{Equal Contribution. $^{\dag}$Corresponding Author.},
Junyou Su$^{2*}$,
Peng Lai$^{1*}$,
Ren Ma$^{3}$,
Wenjia Zhang$^{2}$,
Linyi Yang$^{1}$,
Guanhua Chen$^{1}$$^{\dag}$\\
$^1$Southern University of Science and Technology,  
$^2$Peking University \\
$^3$Shanghai Artificial Intelligence Laboratory  \\
\small \texttt{~\{zhuhe, jysu25\}@stu.pku.edu.cn, chengh3@sustech.edu.cn} \\
}

\begin{document}

\maketitle

\begin{abstract}
    Post-training of large language models involves a fundamental trade-off between supervised fine-tuning (SFT), which efficiently mimics demonstrations but tends to memorize, and reinforcement learning (RL), which achieves better generalization at higher computational cost. Dynamic Fine-Tuning (DFT) recently emerged as a promising middle ground, reweighting SFT objectives with token probabilities and achieving improvements in certain reasoning domains, though it exhibits instability in other tasks.
    We provide an analysis of DFT through the \textbf{reward-weighted regression (RWR)} framework, revealing that it corresponds to a specific auxiliary distribution choice that yields provably tighter RL bounds than standard SFT. However, our analysis also uncovers a critical limitation: this construction lacks distributional anchoring, leading to progressive drift that undermines training stability. To address this, we propose \textbf{Anchored Supervised Fine-Tuning (\mname)}, which augments DFT's reweighting with \textbf{lightweight KL regularization} to preserve tightness while ensuring stability. Empirically, \mname consistently outperforms both SFT and DFT across mathematical reasoning, medical knowledge grounding, and code generation, achieving substantial improvements with minimal computational overhead. Our RWR framework provides a systematic lens for understanding post-training methods and demonstrates that principled theoretical analysis leads to both stronger guarantees and practical gains. The code is available at \url{https://github.com/zhuchichi56/ASFT}.
\end{abstract}

\section{Introduction} \label{sec:intro}

Large language models (LLMs) have become a central substrate for modern AI systems, powering instruction following, tool use, and multi-step reasoning at scale~\citep{brown2020language,touvron2023llama,achiam2023gpt,guo2025deepseek}. Post-training is crucial to adapt pretrained models to tasks and human preferences. This process typically involves two primary paradigms: supervised fine-tuning (SFT), which is an off-policy method that imitates expert demonstrations collected from a fixed dataset, and reinforcement learning (RL), which is an on-policy approach that optimizes outcome-based rewards by directly interacting with the model's own outputs~\citep{ouyang2022training,rafailov2024direct,shao2024deepseekmath}. While SFT is data- and compute-efficient, excelling at rapid acquisition of desired behaviors, it tends to memorize surface patterns rather than learn robust, generalizable strategies~\citep{chu2025sft,zhang2021understanding,feldman2020does}. In contrast, RL leverages outcome-driven updates and exploration to discover more transferable behaviors, but is substantially more expensive and unstable in practice~\citep{schulman2017proximal,ziegler2019fine,stiennon2020learning}. This fundamental trade-off motivates methods that retain SFT's efficiency while inheriting RL's generalization benefits~\citep{bai2022training,lee2023rlaif,yuan2024self}.

A growing body of work re-examines SFT through an RL lens, arguing that the \emph{implicit reward} induced by maximum likelihood is pathological and that principled reweighting or trust regions are needed. Among these approaches, Dynamic Fine-Tuning (DFT)~\citep{wu2025dft} has gained significant attention by identifying a pathological reward structure in standard SFT that leads to unbounded variance when model probabilities approach zero. DFT addresses this through probability-based reweighting, achieving remarkable empirical improvements in mathematical reasoning tasks. However, our preliminary experiments reveal that DFT's effectiveness is domain-specific; it excels in reasoning-intensive domains yet exhibits instability in knowledge-intensive tasks and lacks theoretical grounding for its design choices.

\vspace{2.0cm} 

\begin{figure}[t]
    \centering
    \includegraphics[width=1\textwidth]{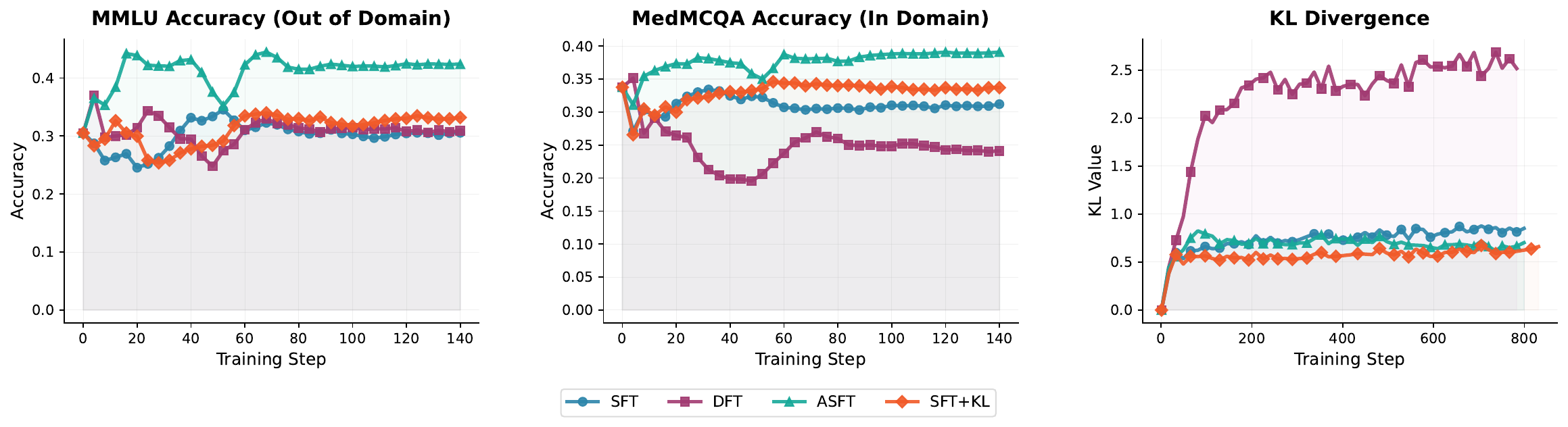}
    \caption{Training dynamics comparison across fine-tuning methods on medical knowledge tasks. \textbf{Left}: MMLU accuracy (out-of-domain evaluation); \textbf{Center}: MedMCQA accuracy (in-domain evaluation); \textbf{Right}: KL divergence from base model. DFT exhibits severe distributional drift (high KL divergence) while \mname maintains stability through KL anchoring and achieves superior performance on both tasks.}
    \label{fig:accuracy_kl}
\end{figure}

\vspace{-2.0cm} 
To address these problems, we provide a principled theoretical analysis of DFT within a reward-weighted regression framework inspired by reward-weighted regression and importance sampling theory~\citep{Rubinstein}. From the theoretical perspective, we reveal that DFT corresponds to a specific auxiliary distribution construction that yields a provably tighter lower bound on the RL objective compared to standard SFT. This causes a critical limitation: the absence of distributional anchoring mechanisms leads to progressive drift away from the reference distribution. The distributional shift makes the bound increasingly loose and increases importance weight variance, explaining the training instabilities of DFT.

To address these fundamental limitations, we propose \textbf{Anchored Supervised Fine-Tuning (\mname)}, a lightweight extension of DFT that incorporates a KL divergence regularization term to prevent distributional drift while preserving the tightness benefits of adaptive reweighting. As demonstrated in Figure~\ref{fig:accuracy_kl}, \mname maintains stable KL divergence while achieving superior performance across both in-domain and out-of-domain evaluations. 

Empirically, \mname consistently outperforms both standard SFT and DFT across mathematical reasoning, medical knowledge, and code generation tasks. On mathematical reasoning benchmarks with 100k training samples, \mname achieves an average improvement of +4.85 points (18.6\%) over DFT and +17.89 points (142\%) over the base model. In medical knowledge tasks with 10k samples, \mname delivers +8.28 points (24.8\%) improvement over SFT and +10.65 points (33.9\%) over the base model, requiring only 3\% of the training cost of full RL approaches.

 \textbf{Our contributions} are threefold: \textbf{(1)} We provide a rigorous theoretical explanation for DFT's domain-specific effectiveness and inherent instabilities, grounding its heuristic design within the formal reward-weighted regression framework and proving that it achieves a strictly tighter bound than SFT while suffering from uncontrolled variance growth. \textbf{(2)} We propose \mname, a simple yet principled method that resolves DFT's stability issues through lightweight KL anchoring while maintaining its tightness advantages, requiring minimal computational overhead compared to full RL approaches. \textbf{(3)} We demonstrate that \mname delivers superior performance across both reasoning-intensive and knowledge-intensive domains, achieving better generalization than SFT, greater stability and broader applicability than DFT, and RL-comparable performance with SFT-level computational efficiency\footnote{The code is available at \url{https://github.com/zhuchichi56/ASFT}.}.

\section{Related Work}

\paragraph{Supervised fine-tuning and reinforcement learning.}
Supervised fine-tuning (SFT) and reinforcement learning (RL) are the two dominant paradigms for post-training large language models. 
SFT can be viewed as optimizing a stable but \emph{loose lower bound} on the RL objective, which ensures efficiency and robustness but often leads to memorization and limited generalization~\citep{wei2022finetuned,chung2022scaling,zhang2021understanding,chu2025sft}. 
In contrast, RL directly optimizes outcome-based rewards, achieving tighter alignment and stronger generalization but at the cost of instability, high variance, and heavy computation~\citep{ouyang2022training,schulman2015trust,schulman2017proximal}. 
Recent work seeks to bridge this trade-off by either tightening the SFT bound via reweighting and importance weighting~\citep{wu2025dft,qin2025iwsft} or stabilizing RL through trust-region and hybrid methods~\citep{sheng2024hybridflow,zhu2025psft}.

\paragraph{Importance weighting and policy optimization.} 
The connection between supervised learning and reinforcement learning through importance weighting has deep theoretical roots~\citep{Kahn1953MethodsOR,DayanEM}. In the context of policy optimization, importance sampling enables off-policy learning by reweighting samples from a behavior policy to estimate gradients for a target policy~\citep{Metelli,jiang16}, though the resulting weights can suffer from high variance when distributions are misaligned~\citep{ISvar}. To address this, trust-region methods~\citep{schulman2015trust} and proximal policy optimization~\citep{schulman2017proximal} constrain policy updates to remain close to a reference policy. Building on these ideas, recent advances in language model fine-tuning have explored importance-weighted supervised objectives~\citep{qin2025iwsft}, proximal supervised fine-tuning~\citep{zhu2025psft}, and probability-based reweighting in dynamic fine-tuning~\citep{wu2025dft}, as well as unified frameworks that combine SFT and RL principles~\citep{lv2025unifiedviewlargelanguage,wu2025generalization}. Our work follows this line of research by anchoring the auxiliary distribution itself to the base model, thereby extending importance-weighted approaches with a mechanism that stabilizes training while retaining their theoretical tightness.

\section{Preliminaries}
\label{sec:preliminaries}

In this section, we establish the theoretical foundation connecting supervised fine-tuning and reinforcement learning through the reward-weighted regression framework, setting the stage for our analysis of existing methods and our proposed \mname approach.

\subsection{Problem Formulation and Basic Frameworks}

We consider language modeling where trajectories $\tau = (x, y)$ consist of input prompts $x$ and generated responses $y$. A parametric policy $\pi_\theta(\tau) = \pi_\theta(y \mid x) = \prod_{t=1}^{|y|} \pi_\theta(y_t \mid y_{<t}, x)$ assigns probability via autoregressive decomposition. The \textbf{reinforcement learning objective} maximizes expected reward $J(\theta) = \mathbb{E}_{\tau \sim \pi_\theta}[R(\tau)]$ where $R(\tau): \mathcal{X} \times \mathcal{Y} \to [0,1]$ evaluates trajectory quality. In contrast, \textbf{supervised fine-tuning} performs behavior cloning on expert demonstrations $\mathcal{D} = \{(x, y^*)\}$ sampled from reference distribution $\pi_{\mathrm{ref}}(\tau)$ by minimizing $\mathcal{L}_{\mathrm{SFT}}(\theta) = -\mathbb{E}_{(x,y^*) \sim \mathcal{D}}[\log \pi_\theta(y^* \mid x)]$.

\subsection{The Reward-Weighted Regression Framework}

Building on prior work in reward-weighted regression and importance sampling~\citep{PetersRWR,Rubinstein,qin2025iwsft}, we adopt the \textbf{reward-weighted regression (RWR) framework} for language model fine-tuning. This framework provides a principled connection between SFT and RL objectives by leveraging importance sampling and auxiliary distributions to construct tighter bounds on the RL objective.

Under sparse rewards where $R(\tau) = \mathbb{I}[y = y^*]$ and the assumption that $\mathrm{supp}(\pi_\theta) \subseteq \mathrm{supp}(\pi_{\mathrm{ref}})$, we can establish the following fundamental result:

\begin{proposition}[SFT as RL Lower Bound]
\label{prop:sft_bound}
The RL objective satisfies:
\begin{equation}
J(\theta) \geq c_{\mathrm{ref}} \cdot \mathbb{E}_{\tau \in D^+}[\log \pi_\theta(\tau)]
\end{equation}
where $D^+ = \{(x,y^*) \mid R(x,y^*) = 1\}$ and $c_{\mathrm{ref}} = \mathbb{P}_{\pi_{\mathrm{ref}}}(\tau \in D^+)$.
\end{proposition}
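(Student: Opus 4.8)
The plan is to route the bound through the same importance-sampling identity that underpins the RWR framework, then collapse it with an elementary logarithmic inequality. First I would specialize the RL objective to the sparse-reward case: since $R(\tau)=\mathbb{I}[y=y^*]=\mathbb{I}[\tau\in D^+]$, we have $J(\theta)=\mathbb{E}_{\tau\sim\pi_\theta}[R(\tau)]=\mathbb{P}_{\pi_\theta}(\tau\in D^+)$. Using the support assumption $\mathrm{supp}(\pi_\theta)\subseteq\mathrm{supp}(\pi_{\mathrm{ref}})$ — which ensures $\pi_\theta(\tau)=0$ whenever $\pi_{\mathrm{ref}}(\tau)=0$, so the importance ratio is well defined wherever it matters — I change measure to $\pi_{\mathrm{ref}}$: $J(\theta)=\mathbb{E}_{\tau\sim\pi_{\mathrm{ref}}}\big[\mathbb{I}[\tau\in D^+]\,\tfrac{\pi_\theta(\tau)}{\pi_{\mathrm{ref}}(\tau)}\big]=c_{\mathrm{ref}}\,\mathbb{E}_{\tau\sim q}\big[\tfrac{\pi_\theta(\tau)}{\pi_{\mathrm{ref}}(\tau)}\big]$, where $q(\tau):=\pi_{\mathrm{ref}}(\tau)/c_{\mathrm{ref}}$ on $D^+$ is $\pi_{\mathrm{ref}}$ conditioned on $D^+$ — exactly the distribution against which the SFT expectation $\mathbb{E}_{\tau\in D^+}[\cdot]$ is taken once the demonstration set consists of correct trajectories.

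Next I would apply the inequality $z\ge 1+\log z$, valid for all $z>0$ (equivalently Jensen's inequality $\mathbb{E}[z]\ge\exp(\mathbb{E}[\log z])$ followed by $e^u\ge 1+u$), pointwise to $z=\pi_\theta(\tau)/\pi_{\mathrm{ref}}(\tau)$, and take the expectation under $q$. This gives $\mathbb{E}_{\tau\sim q}\big[\tfrac{\pi_\theta(\tau)}{\pi_{\mathrm{ref}}(\tau)}\big]\ge 1+\mathbb{E}_{\tau\sim q}[\log\pi_\theta(\tau)]-\mathbb{E}_{\tau\sim q}[\log\pi_{\mathrm{ref}}(\tau)]$. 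Since probabilities lie in $[0,1]$, the cross-entropy-type term $-\mathbb{E}_{\tau\sim q}[\log\pi_{\mathrm{ref}}(\tau)]$ is nonnegative, as is the constant $1$; discarding both and multiplying through by $c_{\mathrm{ref}}\ge 0$ yields $J(\theta)\ge c_{\mathrm{ref}}\,\mathbb{E}_{\tau\in D^+}[\log\pi_\theta(\tau)]$, which is the claim.

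There is no genuinely hard step here; the work is entirely bookkeeping. The two points that need care are (i) justifying the change of measure, which is where the support assumption is invoked and where one must be explicit that the contributions from $\tau\in D^+$ with $\pi_{\mathrm{ref}}(\tau)=0$ vanish on the $\pi_\theta$ side; and (ii) identifying $\mathbb{E}_{\tau\in D^+}[\cdot]$ with expectation under the conditional reference distribution $q$, which is what connects the abstract inequality to the concrete loss $\mathcal{L}_{\mathrm{SFT}}$. I would also remark, since it motivates the remainder of the paper, that the bound is deliberately \emph{loose}: the discarded slack is the constant $1$ together with $-\mathbb{E}_{\tau\sim q}[\log\pi_{\mathrm{ref}}(\tau)]$, and it is precisely by replacing the auxiliary sampling distribution $q$ with a better (reward- or probability-reweighted) choice that DFT — and, with anchoring, ASFT — tightens this inequality.
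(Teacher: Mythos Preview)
Your proposal is correct and follows essentially the same route as the paper's proof: change of measure to $\pi_{\mathrm{ref}}$ via importance sampling, pointwise application of $u\ge 1+\log u$ to the ratio $\pi_\theta/\pi_{\mathrm{ref}}$, and conversion of the indicator-weighted expectation into an expectation over $D^+$. If anything, you are slightly more careful than the paper in explicitly justifying why the residual terms $1$ and $-\mathbb{E}_{\tau\sim q}[\log\pi_{\mathrm{ref}}(\tau)]$ may be discarded (they are nonnegative since probabilities lie in $[0,1]$), whereas the paper simply writes ``dropping the constant terms.''
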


This reveals that SFT optimization implicitly maximizes a lower bound on the RL objective. However, this bound becomes increasingly loose as $\pi_\theta$ diverges from $\pi_{\mathrm{ref}}$ during training.

Within the RWR framework, we can generalize to tighter bounds through auxiliary distributions. For any distribution $q(\tau)$ with appropriate support:
\begin{equation}
J(\theta) \geq c_{\mathrm{ref}} \cdot \mathbb{E}_{\tau \in D^+}\left[\frac{q(\tau)}{\pi_{\mathrm{ref}}(\tau)} \log \pi_\theta(\tau)\right]
\end{equation}

The choice of auxiliary distribution $q$ determines both the tightness of the bound and the stability of the resulting optimization procedure. This sets up the fundamental trade-off between validity and tightness that we address in this work.

\subsection{Dynamic Fine-Tuning: An Existing Approach}

Dynamic Fine-Tuning (DFT)~\citep{wu2025dft} addresses SFT's limitations by identifying a pathological reward structure in standard SFT. When viewed as a policy gradient method, SFT's implicit reward $r_{\mathrm{SFT}}(y \mid x) = \frac{\mathbb{I}[y = y^*]}{\pi_\theta(y \mid x)}$ exhibits inverse-probability weighting that causes unbounded variance when $\pi_\theta(y^* \mid x)$ approaches zero.

DFT addresses this through probability-based reweighting:
\begin{equation}
\mathcal{L}_{\mathrm{DFT}}(\theta) = -\mathbb{E}_{(x,y^*) \sim \mathcal{D}}[\operatorname{sg}[\pi_\theta(y^* \mid x)] \log \pi_\theta(y^* \mid x)]
\end{equation}
where $\operatorname{sg}[\cdot]$ denotes the stop-gradient operator.

While DFT achieves empirical improvements, its theoretical properties within the RWR framework remained unclear—a gap we address in our analysis.

\section{Method}
\label{sec:method}

\subsection{Theoretical Analysis of DFT within the RWR Framework}

We found that DFT can be precisely characterized within the RWR framework through a specific auxiliary distribution construction. This analysis reveals both DFT's strengths and fundamental limitations.

\textbf{Key Finding 1: DFT corresponds to a specific auxiliary distribution choice.}
We discovered that the DFT objective is mathematically equivalent to choosing the auxiliary distribution:
\begin{equation}
\label{eq:aux_dist_construction}
q(\tau) = \frac{\pi_{\mathrm{ref}}(\tau \mid D^+)\,\operatorname{sg}[p_\theta(\tau)]}{\mathbb{E}_{\tau \sim \pi_{\mathrm{ref}}(\cdot\mid D^+)}[\operatorname{sg}[p_\theta(\tau)]]}
\end{equation}

This construction directly recovers the DFT sequence-level objective:
\begin{equation}
\mathcal{L}_{\mathrm{DFT}}(\theta) = -\mathbb{E}_{\tau \in D^+}[\operatorname{sg}(p_\theta(\tau)) \log p_\theta(\tau)]
\end{equation}

\textbf{Key Finding 2: DFT achieves provably tighter bounds than SFT.}
We proved that this auxiliary distribution yields a strictly tighter lower bound on the RL objective compared to standard SFT whenever the policy assigns non-uniform probabilities to demonstrations (detailed proof in Appendix~\ref{app:tightness_proof}).

\begin{theorem}[Strict Tightness]
\label{thm:tightness}
The DFT auxiliary distribution yields a strictly tighter lower bound than standard SFT whenever $\mathrm{Var}(p_\theta(\tau)) > 0$ on $D^+$.
\end{theorem}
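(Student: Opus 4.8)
The plan is to reduce the comparison between the two lower bounds — evaluated at the current parameters $\theta$ — to a single correlation inequality. First I would instantiate the generalized RWR bound $J(\theta)\ge c_{\mathrm{ref}}\,\mathbb{E}_{\tau\in D^+}\!\big[\tfrac{q(\tau)}{\pi_{\mathrm{ref}}(\tau)}\log\pi_\theta(\tau)\big]$ at the two relevant auxiliary choices: $q=\pi_{\mathrm{ref}}(\cdot\mid D^+)$, which is the SFT instance (cf.\ Proposition~\ref{prop:sft_bound}), and $q=q_{\mathrm{DFT}}$ from \eqref{eq:aux_dist_construction}. Writing $\mu:=\pi_{\mathrm{ref}}(\cdot\mid D^+)$ and treating $p:=\operatorname{sg}[p_\theta]$ as the fixed positive function on $D^+$ that the stop-gradient produces — so that $q_{\mathrm{DFT}}=\mu\,p/Z$ with normalizer $Z:=\mathbb{E}_{\tau\sim\mu}[p(\tau)]>0$ and $\log\pi_\theta(\tau)=\log p(\tau)$ — a direct substitution using $\pi_{\mathrm{ref}}(\tau)=c_{\mathrm{ref}}\,\mu(\tau)$ on $D^+$ collapses each right-hand side to a clean expectation: the SFT bound value becomes $\mathbb{E}_\mu[\log p]$ and the DFT bound value becomes $\mathbb{E}_\mu[p\log p]\,/\,\mathbb{E}_\mu[p]$.

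Next I would clear the positive denominator $Z=\mathbb{E}_\mu[p]>0$ and observe that the inequality ``DFT bound $\ge$ SFT bound'' is then exactly $\mathbb{E}_\mu[p\log p]-\mathbb{E}_\mu[p]\,\mathbb{E}_\mu[\log p]\ge 0$, i.e.\ $\mathrm{Cov}_\mu\big(p(\tau),\log p(\tau)\big)\ge 0$. The last ingredient is the elementary correlation inequality $\mathrm{Cov}(X,g(X))\ge 0$ for any non-decreasing $g$, which I would prove by the standard two-copies identity: for two independent draws $\tau,\tau'\sim\mu$, one has $2\,\mathrm{Cov}_\mu(p,\log p)=\mathbb{E}\big[(p(\tau)-p(\tau'))(\log p(\tau)-\log p(\tau'))\big]\ge 0$, since $\log$ is increasing so the two factors always share a sign. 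The integrand vanishes precisely when $p(\tau)=p(\tau')$, so the covariance is zero iff $p(\tau)$ is $\mu$-almost-surely constant, i.e.\ iff $\mathrm{Var}_\mu(p(\tau))=0$; therefore $\mathrm{Var}_\mu(p(\tau))>0$ forces the strict inequality ``DFT bound $>$ SFT bound'', which is the assertion.

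The step I expect to require the most care is not the algebra but the scoping around the stop-gradient: $q_{\mathrm{DFT}}$ is only a genuine probability distribution once $\operatorname{sg}$ has frozen the reweighting, so the statement being proved compares the SFT and DFT \emph{instances of the generalized RWR bound at the current $\theta$}, with $p_\theta$ held constant; I would make this explicit rather than leave it implicit, and note that the condition $\mathrm{Var}(p_\theta(\tau))>0$ on $D^+$ is precisely the statement that the frozen policy assigns non-uniform probabilities to the demonstrations. A secondary point is checking the positivity/support hypotheses used above ($p_\theta(\tau)>0$ on $D^+$, so that $Z>0$, $\log p$ is finite, and both the generalized bound and the division by $Z$ are legitimate); these follow from $\mathrm{supp}(\pi_\theta)\subseteq\mathrm{supp}(\pi_{\mathrm{ref}})$ together with the standing assumption that every demonstration receives positive model probability. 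Everything else — the substitutions collapsing the two bounds and the one-line covariance argument — is routine.
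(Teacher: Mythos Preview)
Your proposal is correct and follows essentially the same route as the paper's proof: both instantiate the two bounds, subtract, and reduce the comparison to $\mathrm{Cov}_\mu(p_\theta(\tau),\log p_\theta(\tau))\ge 0$ with equality iff $p_\theta$ is constant on $D^+$. The only difference is cosmetic: the paper states the covariance inequality by appealing to comonotonicity of $X$ and $\log X$, whereas you supply the standard two-copies identity $2\,\mathrm{Cov}(p,\log p)=\mathbb{E}[(p(\tau)-p(\tau'))(\log p(\tau)-\log p(\tau'))]$ to prove it; your extra care about the stop-gradient scoping and the positivity hypotheses is also absent from the paper's version but does not change the argument.
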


This theoretical result explains DFT's superior empirical performance in domains where the policy distribution exhibits sufficient variance across training examples.

\textbf{Key Finding 3: DFT suffers from distributional drift.}
However, our analysis also revealed a critical limitation: the policy distribution progressively diverges from the reference distribution during training. As optimization proceeds, $q$ becomes increasingly concentrated on trajectories with high $p_\theta(\tau)$, creating a feedback loop where the model focuses on a diminishing subset of training data. This distributional drift makes the bound increasingly loose and the importance weights increasingly high-variance, reducing effective sample size and destabilizing optimization.

We formalized this instability by noting that the fundamental inequality $u \geq 1 + \log u$ (used to derive the RL lower bound) achieves equality if and only if $u = 1$. In DFT's case, $u = \frac{\pi_\theta(\tau)}{q_\theta(\tau)}$, so the bound is tight only when $\pi_\theta(\tau) = q_\theta(\tau)$, i.e., when $p_\theta(\tau)$ is constant on $D^+$. However, as training progresses, $p_\theta(\tau)$ becomes increasingly non-uniform, making the inequality strictly loose. This leads to deteriorating bound quality, reduced effective sample size, and training instability.

\subsection{Anchored Supervised Fine-Tuning (\mname)}

To address DFT's distributional drift while preserving its tightness advantages, we propose \textbf{Anchored Supervised Fine-Tuning (\mname)}. Our method adds a lightweight KL regularization term that constrains the policy within a trust region of a reference checkpoint:

\begin{equation}
\mathcal{L}_{\mathrm{\mname}}(\theta) = \mathcal{L}_{\mathrm{DFT}}(\theta) + \lambda \mathbb{E}_s[D_{\mathrm{KL}}(\pi_{\mathrm{base}}(\cdot\mid s) \| \pi_\theta(\cdot\mid s))]
\end{equation}

where $\pi_{\mathrm{base}}$ is a fixed reference policy (typically the pretrained model) and $\lambda > 0$ controls anchoring strength.

\textbf{Theoretical Guarantees.} This design preserves DFT's tightness benefits since the KL term does not alter the lower-bound structure, while providing explicit variance control that prevents the exponential growth that destabilizes pure DFT training. The anchoring mechanism creates a trust region around the reference policy, allowing controlled exploration of tighter bounds without sacrificing distributional stability.

\textbf{Practical Implementation.} Following standard practice in language model training~\citep{ouyang2022training,shao2024deepseekmath}, we implement \mname at the token level by distributing sequence-level weights across tokens using normalized per-position allocation, ensuring mathematical equivalence to our theoretical framework while enabling efficient computation. Our method requires minimal computational overhead compared to standard SFT - adding only a simple KL penalty - yet delivers RL-comparable generalization performance along with SFT-level efficiency.

\section{Experiments}
\label{sec:exp}

\subsection{Setup}

\paragraph{Models.}
We conduct fine-tuning experiments using LLaMA-2-7B\citep{touvron2023llama} and Qwen2.5-7B\citep{qwen2025qwen25technicalreport}, two widely adopted models in the field. We select LLaMA-2-7B specifically to avoid potential contamination from prior supervised knowledge. Qwen2.5-7B, on the other hand, is a state-of-the-art model that is broadly used in current research. For knowledge-intensive (medical) tasks, we utilize both LLaMA-2-7B and Qwen2.5-7B to evaluate knowledge fine-tuning. For mathematical reasoning tasks, we focus exclusively on Qwen2.5-7B due to its superior reasoning capabilities, whereas LLaMA-2-7B is less competitive in mathematical reasoning. This setup enables a systematic study of knowledge and reasoning learning across both fact-based and reasoning-intensive domains.

\paragraph{Datasets.} 
We evaluate \mname on two domains: (i) \textbf{Mathematical reasoning}, using 10k/30k/100k samples from NuminaMath CoT~\citep{numina_math_datasets} for training, and testing on Math500~\citep{hendrycks2021measuring}, Minerva Math~\citep{lewkowycz2022solving}, OlympiadBench~\citep{ai_mathematical_olympiad_2024}, AIME 2024~\citep{aime2024dataset}, and AMC 2023~\citep{amc2023dataset}; (ii) \textbf{Medical knowledge}, using 10k/30k/100k MedMCQA~\citep{pal2022medmcqa} samples for training, and testing on MMLU-medical~\citep{hendrycks2020measuring}, MedQA~\citep{jin2021disease}, and the MedMCQA test set.

\paragraph{Training and Evaluation Settings.}
All methods are implemented using AdamW optimizer, cosine learning rate decay, and warm-up ratio $0.1$. 
For \textbf{mathematical reasoning}, SFT and DFT use model\_max\_length $2048$, global\_batch\_size $256$, learning\_rate $5 \times 10^{-5}$, and are trained for $1$ epoch. 
\mname follows the same configuration with coefficient $\lambda = 0.05$. 
For \textbf{medical knowledge}, we set model\_max\_length $512$, global\_batch\_size $64$, learning\_rate $2 \times 10^{-5}$, and train for $3$ epochs, with \mname again using $\lambda = 0.05$. 
At the evaluation stage, for math, we use the default chat template and Chain-of-Thought (CoT) prompting, report average accuracy over 16 decoding runs (temperature 1.0, max length 4096). For medical, we use standard prompt templates and multiple-choice accuracy. Baselines include SFT, SFT w/ KL, and DFT~\citep{wu2025generalization}.

\subsection{Main Results}

\begin{table*}[t]
    \centering
    \resizebox{\textwidth}{!}{%
    \renewcommand{\arraystretch}{1.15}
    \begin{tabular}{l c c c c | c c c c c c}
        \toprule
        \multirow{2}{*}{\textbf{Methods}} & \multicolumn{4}{c}{\textbf{Medical Benchmarks}} & \multicolumn{6}{c}{\textbf{Math Benchmarks}} \\
        \cmidrule(lr){2-5} \cmidrule(lr){6-11}
        & \textbf{MedQA} & \textbf{MMLU} & \textbf{MedMCQA} & \textbf{Avg.} & 
        \textbf{AIME24} & \textbf{Math500} & \textbf{Minerva} & \textbf{Olympiad} & \textbf{ACM23} & \textbf{Avg.} \\
        \midrule
        \texttt{Base} & 29.85 & 30.52 & 33.76 & 31.38 & 
        1.65 & 28.79 & 9.26 & 7.69 & 15.65 & 12.61 \\
        \midrule
        \multicolumn{11}{@{}l}{\footnotesize\textbf{\textit{Dataset Scale: 10k}}} \\[-2pt]
        \texttt{SFT} & 33.31 \textcolor{orange}{\scriptsize $\uparrow$3.46} & 33.52 \textcolor{orange}{\scriptsize $\uparrow$3.00} & 33.28 \textcolor{teal}{\scriptsize $\downarrow$0.48} & 33.37 \textcolor{orange}{\scriptsize $\uparrow$1.99} &
        1.24 \textcolor{teal}{\scriptsize $\downarrow$0.41} & 41.84 \textcolor{orange}{\scriptsize $\uparrow$13.05} & 11.30 \textcolor{orange}{\scriptsize $\uparrow$2.04} & 12.26 \textcolor{orange}{\scriptsize $\uparrow$4.57} & 17.03 \textcolor{orange}{\scriptsize $\uparrow$1.38} & 16.73 \textcolor{orange}{\scriptsize $\uparrow$4.12} \\
        \texttt{SFT w/ KL} & 29.22 \textcolor{teal}{\scriptsize $\downarrow$0.63} & 30.63 \textcolor{orange}{\scriptsize $\uparrow$0.11} & 33.01 \textcolor{teal}{\scriptsize $\downarrow$0.75} & 30.95 \textcolor{teal}{\scriptsize $\downarrow$0.43} &
        0.41 \textcolor{teal}{\scriptsize $\downarrow$1.24} & 42.21 \textcolor{orange}{\scriptsize $\uparrow$13.42} & 12.05 \textcolor{orange}{\scriptsize $\uparrow$2.79} & 12.08 \textcolor{orange}{\scriptsize $\uparrow$4.39} & 17.19 \textcolor{orange}{\scriptsize $\uparrow$1.54} & 16.79 \textcolor{orange}{\scriptsize $\uparrow$4.18} \\
        \texttt{DFT} & 29.69 \textcolor{teal}{\scriptsize $\downarrow$0.16} & 26.69 \textcolor{teal}{\scriptsize $\downarrow$3.83} & 31.20 \textcolor{teal}{\scriptsize $\downarrow$2.56} & 29.19 \textcolor{teal}{\scriptsize $\downarrow$2.19} &
        \textbf{4.18} \textcolor{orange}{\scriptsize $\uparrow$2.53} & \textbf{59.51} \textcolor{orange}{\scriptsize $\uparrow$30.72} & 17.10 \textcolor{orange}{\scriptsize $\uparrow$7.84} & \textbf{24.95} \textcolor{orange}{\scriptsize $\uparrow$17.26} & 31.09 \textcolor{orange}{\scriptsize $\uparrow$15.44} & 27.77 \textcolor{orange}{\scriptsize $\uparrow$15.16} \\
        \rowcolor{blue!10} \textbf{\texttt{\mname}} & \textbf{39.28} \textcolor{orange}{\scriptsize $\uparrow$9.43} & \textbf{46.37} \textcolor{orange}{\scriptsize $\uparrow$15.85} & \textbf{40.45} \textcolor{orange}{\scriptsize $\uparrow$6.69} & \textbf{42.03} \textcolor{orange}{\scriptsize $\uparrow$10.65} &
        3.33 \textcolor{orange}{\scriptsize $\uparrow$1.68} & 59.60 \textcolor{orange}{\scriptsize $\uparrow$30.81} & \textbf{19.91} \textcolor{orange}{\scriptsize $\uparrow$10.65} & 24.50 \textcolor{orange}{\scriptsize $\uparrow$16.81} & \textbf{36.41} \textcolor{orange}{\scriptsize $\uparrow$20.76} & \textbf{28.75} \textcolor{orange}{\scriptsize $\uparrow$16.14} \\
        \midrule
        \multicolumn{11}{@{}l}{\footnotesize\textbf{\textit{Dataset Scale: 30k}}} \\[-2pt]
        \texttt{SFT} & 33.54 \textcolor{orange}{\scriptsize $\uparrow$3.69} & 38.48 \textcolor{orange}{\scriptsize $\uparrow$7.96} & 36.03 \textcolor{orange}{\scriptsize $\uparrow$2.27} & 36.02 \textcolor{orange}{\scriptsize $\uparrow$4.64} &
        2.71 \textcolor{orange}{\scriptsize $\uparrow$1.06} & 44.74 \textcolor{orange}{\scriptsize $\uparrow$15.95} & 13.21 \textcolor{orange}{\scriptsize $\uparrow$3.95} & 13.44 \textcolor{orange}{\scriptsize $\uparrow$5.75} & 21.56 \textcolor{orange}{\scriptsize $\uparrow$5.91} & 19.93 \textcolor{orange}{\scriptsize $\uparrow$7.32} \\
        \texttt{SFT w/ KL} & 30.56 \textcolor{orange}{\scriptsize $\uparrow$0.71} & 29.86 \textcolor{teal}{\scriptsize $\downarrow$0.66} & 33.56 \textcolor{teal}{\scriptsize $\downarrow$0.20} & 31.33 \textcolor{teal}{\scriptsize $\downarrow$0.05} &
        2.70 \textcolor{orange}{\scriptsize $\uparrow$1.05} & 44.91 \textcolor{orange}{\scriptsize $\uparrow$16.12} & 13.03 \textcolor{orange}{\scriptsize $\uparrow$3.77} & 13.48 \textcolor{orange}{\scriptsize $\uparrow$5.79} & 18.90 \textcolor{orange}{\scriptsize $\uparrow$3.25} & 18.60 \textcolor{orange}{\scriptsize $\uparrow$5.99} \\
        \texttt{DFT} & 31.26 \textcolor{orange}{\scriptsize $\uparrow$1.41} & 33.08 \textcolor{orange}{\scriptsize $\uparrow$2.56} & 35.09 \textcolor{orange}{\scriptsize $\uparrow$1.33} & 33.14 \textcolor{orange}{\scriptsize $\uparrow$1.76} &
        3.34 \textcolor{orange}{\scriptsize $\uparrow$1.69} & \textbf{57.93} \textcolor{orange}{\scriptsize $\uparrow$29.14} & \textbf{23.28} \textcolor{orange}{\scriptsize $\uparrow$14.02} & \textbf{25.31} \textcolor{orange}{\scriptsize $\uparrow$17.62} & 28.44 \textcolor{orange}{\scriptsize $\uparrow$12.79} & \textbf{27.66} \textcolor{orange}{\scriptsize $\uparrow$15.05} \\
        \rowcolor{blue!10} \textbf{\texttt{\mname}} & \textbf{42.03} \textcolor{orange}{\scriptsize $\uparrow$12.18} & \textbf{44.94} \textcolor{orange}{\scriptsize $\uparrow$14.42} & \textbf{39.06} \textcolor{orange}{\scriptsize $\uparrow$5.30} & \textbf{42.01} \textcolor{orange}{\scriptsize $\uparrow$10.63} &
        \textbf{5.81} \textcolor{orange}{\scriptsize $\uparrow$4.16} & 57.03 \textcolor{orange}{\scriptsize $\uparrow$28.24} & 20.61 \textcolor{orange}{\scriptsize $\uparrow$11.35} & 24.44 \textcolor{orange}{\scriptsize $\uparrow$16.75} & \textbf{30.00} \textcolor{orange}{\scriptsize $\uparrow$14.35} & 27.18 \textcolor{orange}{\scriptsize $\uparrow$14.57} \\
        \midrule
        \multicolumn{11}{@{}l}{\footnotesize\textbf{\textit{Dataset Scale: 100k}}} \\[-2pt]
        \texttt{SFT} & 33.46 \textcolor{orange}{\scriptsize $\uparrow$3.61} & 38.01 \textcolor{orange}{\scriptsize $\uparrow$7.49} & 35.67 \textcolor{orange}{\scriptsize $\uparrow$1.91} & 35.71 \textcolor{orange}{\scriptsize $\uparrow$4.33} &
        0.83 \textcolor{teal}{\scriptsize $\downarrow$0.82} & 47.30 \textcolor{orange}{\scriptsize $\uparrow$18.51} & 13.46 \textcolor{orange}{\scriptsize $\uparrow$4.20} & 14.16 \textcolor{orange}{\scriptsize $\uparrow$6.47} & 20.00 \textcolor{orange}{\scriptsize $\uparrow$4.35} & 19.15 \textcolor{orange}{\scriptsize $\uparrow$6.54} \\
        \texttt{SFT w/ KL} & 30.09 \textcolor{orange}{\scriptsize $\uparrow$0.24} & 31.62 \textcolor{orange}{\scriptsize $\uparrow$1.10} & 33.85 \textcolor{orange}{\scriptsize $\uparrow$0.09} & 31.85 \textcolor{orange}{\scriptsize $\uparrow$0.47} &
        1.44 \textcolor{teal}{\scriptsize $\downarrow$0.21} & 46.81 \textcolor{orange}{\scriptsize $\uparrow$17.02} & 14.13 \textcolor{orange}{\scriptsize $\uparrow$4.87} & 13.74 \textcolor{orange}{\scriptsize $\uparrow$6.05} & 20.00 \textcolor{orange}{\scriptsize $\uparrow$4.35} & 19.22 \textcolor{orange}{\scriptsize $\uparrow$6.61} \\
        \texttt{DFT} & 36.61 \textcolor{orange}{\scriptsize $\uparrow$6.76} & 41.26 \textcolor{orange}{\scriptsize $\uparrow$10.74} & 36.31 \textcolor{orange}{\scriptsize $\uparrow$2.55} & 38.06 \textcolor{orange}{\scriptsize $\uparrow$6.68} &
        6.26 \textcolor{orange}{\scriptsize $\uparrow$4.61} & 56.88 \textcolor{orange}{\scriptsize $\uparrow$28.09} & 21.18 \textcolor{orange}{\scriptsize $\uparrow$11.92} & 22.68 \textcolor{orange}{\scriptsize $\uparrow$14.99} & 27.19 \textcolor{orange}{\scriptsize $\uparrow$11.54} & 26.04 \textcolor{orange}{\scriptsize $\uparrow$13.43} \\
        \rowcolor{blue!10} \textbf{\texttt{\mname}} & \textbf{40.61} \textcolor{orange}{\scriptsize $\uparrow$10.76} & \textbf{42.02} \textcolor{orange}{\scriptsize $\uparrow$11.50} & \textbf{37.32} \textcolor{orange}{\scriptsize $\uparrow$3.56} & \textbf{39.98} \textcolor{orange}{\scriptsize $\uparrow$8.60} &
        \textbf{6.66} \textcolor{orange}{\scriptsize $\uparrow$5.01} & \textbf{59.99} \textcolor{orange}{\scriptsize $\uparrow$31.20} & \textbf{23.55} \textcolor{orange}{\scriptsize $\uparrow$14.29} & \textbf{25.57} \textcolor{orange}{\scriptsize $\uparrow$17.88} & \textbf{36.72} \textcolor{orange}{\scriptsize $\uparrow$21.07} & \textbf{30.50} \textcolor{orange}{\scriptsize $\uparrow$17.89} \\
        \bottomrule
    \end{tabular}%
    }
    \caption{Performance comparison of fine-tuning methods on medical benchmarks (left, base: LLaMA-2-7B) and math benchmarks (right, base: Qwen2.5-7B) under different dataset scales. \textbf{Bold} numbers indicate the best performance in each group, and rows with \colorbox{blue!10}{blue background} highlight our \texttt{ASFT} approach. Arrows with \textcolor{orange}{$\uparrow$} and \textcolor{teal}{$\downarrow$} indicate performance improvements and degradations relative to the base model.}
\label{tab:joint-results}
\end{table*}

Our experimental results demonstrate that \mname consistently delivers superior performance across both knowledge-intensive and reasoning-intensive domains while maintaining training stability.  As shown in Table~\ref{tab:joint-results}, across both medical knowledge and mathematical reasoning tasks, \mname consistently delivers strong and stable improvements over all baselines. In knowledge-intensive domains, \mname not only avoids the severe performance degradation observed with DFT (which drops by an average of -2.19 points at 10k samples), but also achieves substantial gains—outperforming the base model by +10.65 points (a 33.9\% relative improvement) at 10k scale, and maintaining robust advantages as the dataset size increases (10k: +10.65, 30k: +10.63, 100k: +8.60). This stability across scales highlights \mname’s scalability and addresses the distributional drift issues that limit DFT in such settings. For mathematical reasoning, both DFT and \mname surpass standard SFT, but \mname maintains a consistent edge and greater training stability. With 100k samples, \mname improves over the base by +17.89 points (versus DFT’s +13.43), and the advantage is even more pronounced on challenging benchmarks like AMC23 (36.72\% vs.\ 27.19\% for DFT), reflecting superior generalization. Overall, \mname’s improvements are not only larger but also more consistent across diverse benchmarks, while DFT’s gains are more variable—especially on tasks requiring broad mathematical reasoning—underscoring the robustness and effectiveness of \mname.

\subsection{Ablation Study}

We conduct two ablations: (1) comparing forward vs.\ reverse KL regularization, and (2) analyzing the impact of learning rate and batch size. These studies clarify the effect of KL direction and the robustness of \mname\ to key hyperparameters.

\paragraph{Forward vs.\ Reverse KL.}
Here we follow the standard notation: let $Q$ denote the policy model (i.e., the fine-tuned model with parameters $\theta$), and $P$ denote the reference model (the pretrained base model). In our approach, we use the \textbf{forward} KL divergence, $D_{\mathrm{KL}}(P \,\|\, Q)$, to regularize the policy model. For comparison, we also experiment with the \textbf{reverse} KL divergence, $D_{\mathrm{KL}}(Q \,\|\, P)$. As shown in Figure~\ref{fig:kl-rkl}, forward KL leads to consistent improvements. Theoretically, forward KL encourages \textbf{mode-covering} behavior, preventing the model from collapsing and ensuring it maintains the broad distribution of the base model. In contrast, reverse KL is \textbf{mode-seeking}, which can cause the model to focus excessively on a few high-probability sequences. We further sweep the regularization coefficient $\lambda$ and find that the optimal range is around $\lambda=0.1$; both excessively small and large values lead to under-anchoring or over-regularization, respectively.

\begin{figure}[t]
    \centering
    \includegraphics[width=0.95\textwidth]{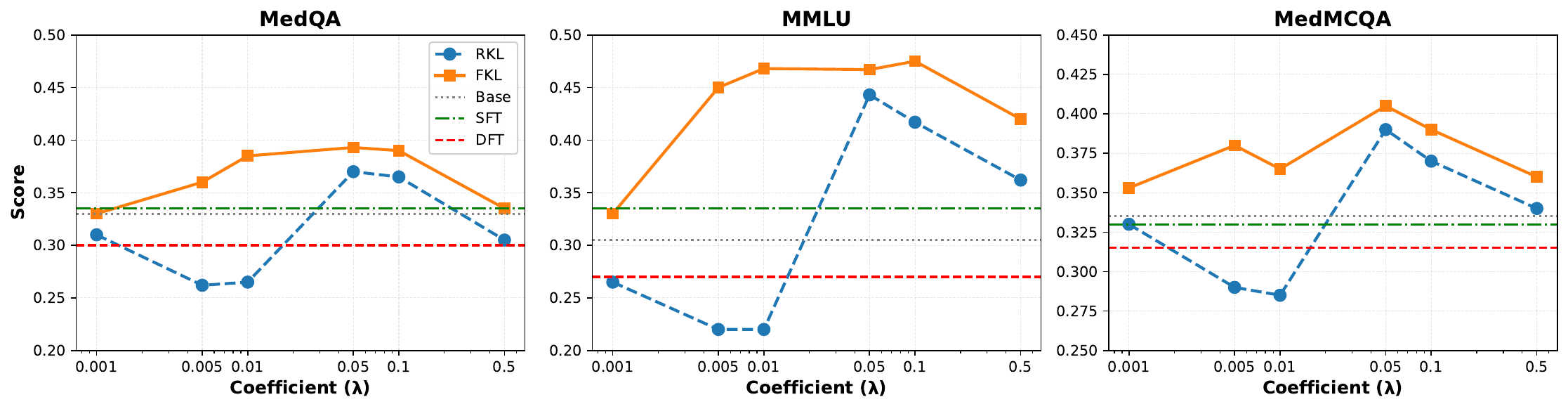}
    \caption{Comparison of forward KL (FKL) and reverse KL (RKL) regularization effects across different coefficient values ($\lambda$) on MedQA, MMLU, and MedMCQA benchmarks. Performance is measured in accuracy, with horizontal dashed lines indicating baseline performance of Base, SFT, and DFT models.}
    \label{fig:kl-rkl}
\end{figure}

\paragraph{Training Hyper-Parameters Ablation.}
We conduct an ablation study to evaluate the robustness of \mname with respect to learning rate and batch size, using LLaMA-2-7B and 10k samples from MedCAQA. For learning rate, we examine six values ranging from 5e-6 to 2e-4. \mname consistently outperforms both SFT and DFT across all configurations, with the best performance observed at intermediate rates (1e-5 and 2e-4). Extreme low (5e-6) or high (2e-4) rates lead to minor drops, highlighting that moderate learning rates are preferable but \mname remains robust overall. For batch size, we sweep values from 32 to 256. The results show stable performance across the full range, with only small fluctuations. This indicates that batch size is not a sensitive factor for \mname, and standard settings suffice. Overall, \mname demonstrates both strong robustness and low sensitivity to key hyperparameters, maintaining a consistent advantage over SFT and DFT in medical QA fine-tuning tasks. The detailed results for learning rate and batch size sweeps are shown in Appendix~\ref{app:bs_lr}.

\section{Analysis and Discussion}

\begin{figure}[t]
    \centering
    \includegraphics[width=0.95\textwidth]{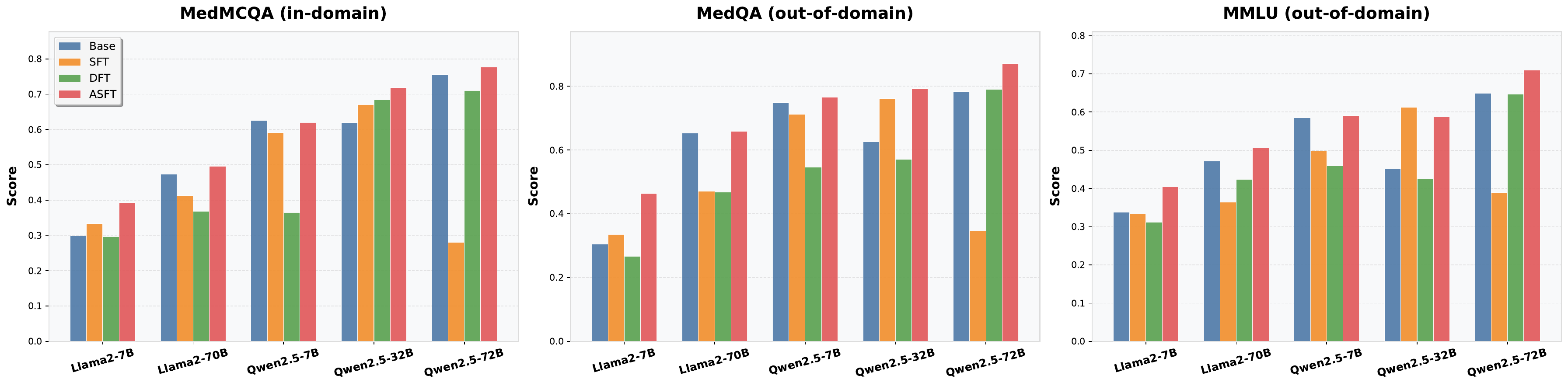}
    \caption{Comparison of model performance across three benchmarks (MedQA, MMLU, MedMCQA) for five models (LLaMA-2-7B, LLaMA-2-70B, Qwen2.5-7B, Qwen2.5-32B, Qwen2.5-72B) using four fine-tuning strategies (Base, SFT, DFT, \mname). Each subplot shows the scores for a specific benchmark, highlighting the relative effectiveness of different fine-tuning methods across models.}
    \label{fig:all_metrics_comparison}
\end{figure}

\subsection{Scaling Analysis Across Model Size}
We evaluate fine-tuning methods on medical-domain datasets using 10k training samples (Figure see~\ref{fig:all_metrics_comparison}, detailed results in Appendix~\ref{apendix:model-scale-results}). Our experiments cover both LLaMA-2 (7B and 70B) and Qwen2.5 (7B, 32B, and 72B) models. Across all model sizes, \mname consistently outperforms Base, SFT, and DFT, and its improvements remain robust as models scale, demonstrating effective and stable adaptation under low-resource settings.

\begin{table*}[t]
    \centering
    \small  
    \renewcommand{\arraystretch}{1.1}  
    \setlength{\tabcolsep}{8pt}  
    \begin{tabular}{l|cccc}
    \toprule
    \textbf{Methods} & \textbf{MedQA} & \textbf{MMLU} & \textbf{MedMCQA} & \textbf{Avg.} \\
    \midrule
    \texttt{LLaMA-2-7B}   & 29.85 & 30.52 & 33.76 & 31.38 \\
    \midrule
    \multicolumn{5}{@{}l}{\footnotesize\textbf{\textit{Supervised Fine-Tuning}}} \\[-2pt]
    \texttt{DFT}                & 24.67 \textcolor{teal}{\scriptsize $\downarrow$5.18} & 22.82 \textcolor{teal}{\scriptsize $\downarrow$7.70} & 30.43 \textcolor{teal}{\scriptsize $\downarrow$3.33} & 25.97 \textcolor{teal}{\scriptsize $\downarrow$5.41} \\
    \texttt{iw-SFT}             & 28.36 \textcolor{orange}{\scriptsize $\uparrow$1.49} & 35.89 \textcolor{orange}{\scriptsize $\uparrow$5.37} & 34.88 \textcolor{orange}{\scriptsize $\uparrow$1.12} & 33.04 \textcolor{orange}{\scriptsize $\uparrow$1.66} \\ 
    \texttt{SFT}                & 33.31 \textcolor{orange}{\scriptsize $\uparrow$3.56} & 33.52 \textcolor{orange}{\scriptsize $\uparrow$3.00} & 33.28 \textcolor{teal}{\scriptsize $\downarrow$0.48} & 33.44 \textcolor{orange}{\scriptsize $\uparrow$2.06} \\
    \rowcolor{blue!10}
    \textbf{\texttt{ASFT}}      & \textbf{39.28} \textcolor{orange}{\scriptsize $\uparrow$9.43} & \textbf{46.37} \textcolor{orange}{\scriptsize $\uparrow$15.85} & \textbf{40.45} \textcolor{orange}{\scriptsize $\uparrow$6.69} & \textbf{42.03} \textcolor{orange}{\scriptsize $\uparrow$10.65} \\
    \midrule
    \multicolumn{5}{@{}l}{\footnotesize\textbf{\textit{Reinforcement Learning}}} \\[-2pt]
    \texttt{GRPO}               & 30.48 \textcolor{orange}{\scriptsize $\uparrow$0.63} & 32.46 \textcolor{orange}{\scriptsize $\uparrow$1.94} & 34.64 \textcolor{orange}{\scriptsize $\uparrow$0.88} & 32.53 \textcolor{orange}{\scriptsize $\uparrow$1.15} \\ 
    \texttt{DAPO}               & 39.75 \textcolor{orange}{\scriptsize $\uparrow$9.90} & 48.63 \textcolor{orange}{\scriptsize $\uparrow$18.11} & 38.37 \textcolor{orange}{\scriptsize $\uparrow$4.61} & 42.25 \textcolor{orange}{\scriptsize $\uparrow$10.87} \\
    \midrule
    \multicolumn{5}{@{}l}{\footnotesize\textbf{\textit{Continual RL}}} \\[-2pt]
    \texttt{SFT + DAPO}         & 36.84 \textcolor{orange}{\scriptsize $\uparrow$6.99} & 44.76 \textcolor{orange}{\scriptsize $\uparrow$14.24} & 39.11 \textcolor{orange}{\scriptsize $\uparrow$5.35} & 40.24 \textcolor{orange}{\scriptsize $\uparrow$8.86} \\
    \rowcolor{blue!10}
    \textbf{\texttt{ASFT + DAPO}} & \textbf{41.32} \textcolor{orange}{\scriptsize $\uparrow$11.47} & \textbf{49.54} \textcolor{orange}{\scriptsize $\uparrow$19.02} & \textbf{41.45} \textcolor{orange}{\scriptsize $\uparrow$7.69} & \textbf{44.10} \textcolor{orange}{\scriptsize $\uparrow$12.72} \\
    \bottomrule
    \end{tabular}
    \caption{Comparison of different post-training strategies on medical benchmarks. \textbf{Bold} numbers indicate the best performance in each group, and rows with \colorbox{blue!10}{blue background} highlight our \texttt{ASFT} approach.}
    \label{tab:asft-rl-med}
\end{table*}

\subsection{Comparison with Reinforcement Learning Methods}

Table~\ref{tab:asft-rl-med} presents the performance comparison of different fine-tuning methods on several medical reasoning benchmarks. Our proposed method, \textbf{\mname}, consistently outperforms all SFT-based approaches, including standard SFT, SFT with KL regularization, DFT, and iw-SFT, demonstrating the effectiveness of our adaptive weighting strategy. For example, \mname achieves an average score of 42.03, which is substantially higher than iw-SFT (33.04) and SFT (33.44). However, as expected, \mname still falls short of advanced reinforcement learning-based methods such as DAPO, which achieves an average of 42.25, indicating that while our method narrows the gap with RL approaches, RL still maintains a slight advantage in these tasks. The experimental results validate our theoretical analysis from Section~\ref{sec:method}. The comparison between SFT and DFT (33.44 vs 25.97) confirms DFT's distributional drift problem in knowledge-intensive tasks, while SFT vs \mname (33.44 vs 42.03) demonstrates the effectiveness of our anchored auxiliary distribution construction. Using final accuracy as a proxy for bound tightness, the performance ordering SFT $<$ \mname $<$ DAPO (33.44 $<$ 42.03 $<$ 42.25) empirically supports our reward-weighted regression framework, showing that \mname achieves a tighter lower bound on the RL objective as proven in Theorem~\ref{thm:tightness}. The substantial improvement of \mname over standard SFT while remaining computationally efficient demonstrates the practical value of our theoretically grounded approach.

\subsection{\mname as Enhanced Initialization for Reinforcement Learning}

The results in Table~\ref{tab:asft-rl-med} further demonstrate that \mname provides a superior initialization point for subsequent RL fine-tuning.Starting from \mname and continuing training with DAPO yields consistent gains over SFT + DAPO (44.10 vs.\ 40.24 average, +3.86 points), with the largest improvements on MMLU-medical (+4.78) and MedQA (+4.48).  \mname + DAPO also surpasses standalone DAPO (44.10 vs.\ 42.25), indicating that KL-anchored fine-tuning not only improves direct supervised performance but also creates a more stable policy foundation for RL optimization. This finding suggests that the distributional stability provided by \mname's KL anchoring not only improves direct fine-tuning performance but also creates a more robust foundation for advanced RL algorithms, extending the practical utility of our method beyond standalone applications.

\subsection{Cross-Domain Validation on Code Generation}
To examine the generality of \mname, we fine-tune LLaMA-2-7B on 10k samples from the Magicoder-Evol-Instruct-110K~\citep{wei2023magicoder} dataset 
using the same setup as Section~\ref{sec:exp}, but for 2 epochs. Evaluation is performed with the evalplus framework~\citep{evalplus} on HumanEval, HumanEval+~\citep{chen2021evaluating}, MBPP, and MBPP+~\citep{austin2021program}. The results are summarized in Table~\ref{tab:code-analysis}, \mname achieves the highest average score, notably improving HumanEval and HumanEval+ while remaining competitive on MBPP, confirming that the anchoring mechanism generalizes effectively to code generation.

\begin{table*}[t]
    \centering
    \small
    \renewcommand{\arraystretch}{1.1}
    \setlength{\tabcolsep}{8pt}
    \begin{tabular}{l|ccccc}
        \toprule
        \textbf{Methods} & \textbf{HumanEval} & \textbf{HumanEval+} & \textbf{MBPP} & \textbf{MBPP+} & \textbf{Avg.} \\
        \midrule
        \texttt{LLaMA-2-7B} & 17.1 & 14.6 & 28.3 & 21.2 & 20.3 \\
        \midrule
        \texttt{SFT}       & 23.2 \textcolor{orange}{\scriptsize $\uparrow$6.1} & 19.5 \textcolor{orange}{\scriptsize $\uparrow$4.9} & \textbf{34.1} \textcolor{orange}{\scriptsize $\uparrow$5.8} & \textbf{28.6} \textcolor{orange}{\scriptsize $\uparrow$7.4} & 26.4 \textcolor{orange}{\scriptsize $\uparrow$6.1} \\
        \texttt{iw-SFT}    & 23.8 \textcolor{orange}{\scriptsize $\uparrow$6.7} & 21.3 \textcolor{orange}{\scriptsize $\uparrow$6.7} & 31.0 \textcolor{orange}{\scriptsize $\uparrow$2.7} & 26.5 \textcolor{orange}{\scriptsize $\uparrow$5.3} & 25.7 \textcolor{orange}{\scriptsize $\uparrow$5.4} \\
        \texttt{DFT}       & 15.9 \textcolor{teal}{\scriptsize $\downarrow$1.2} & 12.8 \textcolor{teal}{\scriptsize $\downarrow$1.8} & 28.3 \textcolor{gray}{\scriptsize $\pm$0.0} & 22.3 \textcolor{orange}{\scriptsize $\uparrow$1.1} & 19.8 \textcolor{teal}{\scriptsize $\downarrow$0.5} \\
        \rowcolor{blue!10} \textbf{\texttt{ASFT}} & \textbf{27.2} \textcolor{orange}{\scriptsize $\uparrow$10.1} & \textbf{21.4} \textcolor{orange}{\scriptsize $\uparrow$6.8} & 32.5 \textcolor{orange}{\scriptsize $\uparrow$4.2} & 26.7 \textcolor{orange}{\scriptsize $\uparrow$5.5} & \textbf{27.0} \textcolor{orange}{\scriptsize $\uparrow$6.7} \\
        \bottomrule
    \end{tabular}
    \caption{Performance (\%) on code generation with LLaMA-2-7B. \textbf{Bold} numbers indicate the best performance in each column, and rows with \colorbox{blue!10}{blue background} highlight our \texttt{ASFT} approach.}
    \label{tab:code-analysis}
\end{table*}

\subsection{Computational Efficiency and Memory Analysis}
\label{sec:efficiency}

While \mname demonstrates superior performance across both knowledge-intensive and reasoning-intensive domains, the KL divergence computation against the reference model introduces significant computational overhead that merits careful analysis. During training, \mname requires maintaining the reference model $\pi_{\mathrm{base}}$ in memory alongside the training model $\pi_\theta$, effectively doubling GPU memory consumption from 38.96GB to 88.02GB for full-parameter fine-tuning on LLaMA-2-7B.

\begin{wraptable}{r}{0.3\textwidth}
    \vspace{-10pt}
    \centering
    \scriptsize
    \setlength{\tabcolsep}{1.9pt}
    \renewcommand{\arraystretch}{0.95}
    \begin{tabular}{lccc}
    \toprule
    \textbf{Method} & \textbf{Time} & \textbf{Memory} & \textbf{Acc} \\
    & \textbf{(hrs)} & \textbf{(GB)} & \\
    \midrule
    \texttt{SFT}        & 0.524 & 38.96 & 33.04 \\
    \texttt{DFT}        & \textbf{0.521} & \textbf{38.90} & 25.97 \\
    \texttt{iw-SFT}     & 8.287 & 100.05 & 33.04 \\
    \texttt{GRPO}       & 51.24 & 483.98 & 32.53 \\
    \texttt{DAPO}       & 21.595 & 488.26 & \textbf{42.25} \\
    \midrule
    \rowcolor{blue!10} \texttt{ASFT-LoRA} & \textbf{0.594} & \textbf{40.70} & 39.45 \\
    \rowcolor{blue!10} \texttt{ASFT}      & 0.648 & 88.02 & \textbf{42.03} \\
    \bottomrule
    \end{tabular}
    \caption{Training time, memory, and accuracy of different methods on LLaMA-2-7B.}
    \label{tab:efficiency_analysis}
    \vspace{-8pt}
\end{wraptable}

The KL computation additionally introduces approximately 23.7\% training time overhead compared to standard SFT (0.648 vs 0.524 hours), though remaining substantially more efficient than full RL approaches like GRPO (51.24 hours). All training times are measured on a single NVIDIA A100 GPU. More critically for deployment, inference requires loading both models simultaneously, creating scalability concerns where memory efficiency is paramount.

To address these practical limitations, we propose \mname-LoRA, leveraging the mathematical properties of low-rank adaptation to enable memory-efficient implementation. The key insight exploits LoRA's parameter decomposition $\Delta W = BA$ where the fine-tuned model becomes $\pi_\theta(y|x) = \pi_{\mathrm{base}}(y|x; W_{\mathrm{base}} + BA)$. This parameterization enables computing $D_{\mathrm{KL}}(\pi_{\mathrm{base}}(\cdot|s) \| \pi_\theta(\cdot|s))$ using a single model instance by dynamically switching between $W_{\mathrm{base}}$ and $W_{\mathrm{base}} + BA$ during forward passes, eliminating the need for separate model copies while preserving theoretical anchoring guarantees. As shown in Table~\ref{tab:efficiency_analysis}, \mname-LoRA (rank=8, lr=5e-4) reduces memory consumption to 40.70GB (comparable to standard SFT) and training time to 0.594 hours (13.4\% overhead over SFT), while achieving substantial improvements over SFT baselines (39.45 vs 33.04 overall accuracy). Notably, \mname-LoRA with appropriate learning rate selection achieves 93.9\% of full-parameter \mname's performance (39.45 vs 42.03 overall accuracy) while requiring less than half the memory, demonstrating the versatility of our anchoring framework across different parameter-efficiency regimes. We observe that LoRA-based training benefits from higher learning rates (5e-4) compared to full fine-tuning (2e-5), consistent with recent findings on low-rank adaptation~\citep{hu2021lora}.

\subsection{Alternative Anchoring Mechanisms.} \label{sec:alt-anchoring}
Our theoretical framework reveals that the key to stable optimization lies in providing distributional anchoring to prevent drift, and in principle, any regularization term that constrains the policy distribution can serve this purpose. While KL divergence provides effective anchoring, we investigate whether other terms—particularly the standard SFT loss itself—can offer a more lightweight alternative. The SFT loss $\mathcal{L}_{\mathrm{SFT}}(\theta) = -\mathbb{E}_{(x,y^*) \sim \mathcal{D}}[\log \pi_\theta(y^* \mid x)]$ naturally encourages the model to stay close to demonstrated behaviors, suggesting it may provide implicit anchoring effects as observed in Figure~\ref{fig:accuracy_kl}.

We therefore explore a hybrid objective that combines DFT's adaptive reweighting with SFT-based anchoring:
\begin{equation}
\mathcal{L}_{\mathrm{ASFT-SFT}}(\theta) = \mathcal{L}_{\mathrm{DFT}}(\theta) + \alpha \mathcal{L}_{\mathrm{SFT}}(\theta)
\end{equation}
where $\alpha$ controls the anchoring strength. Compared to KL-based anchoring, this approach is computationally more efficient as it does not require maintaining a separate reference model during training. As shown in Figure~\ref{fig:kl-logp}, we conduct a grid search over $\alpha \in \{0.01, 0.05, 0.1, 0.5, 1.0, 5.0, 10.0\}$ on both math and medical tasks. 

Interestingly, we observe domain-specific patterns: on mathematical reasoning tasks, ASFT-SFT achieves competitive performance at moderate $\alpha$ values (0.05-0.5), but degrades significantly at larger coefficients (dropping from 0.275 to 0.168 as $\alpha$ increases from 0.01 to 10.0). This suggests that for reasoning tasks where the base model already possesses strong prior knowledge, moderate SFT anchoring can effectively stabilize training by activating these capabilities without over-constraining exploration. In contrast, on medical knowledge tasks, ASFT-SFT shows more volatile behavior across different $\alpha$ values, with performance fluctuating between 0.340 and 0.370. This instability likely reflects the challenge of balancing knowledge injection (requiring departure from the base distribution) with stability preservation in knowledge-intensive domains. Throughout both settings, ASFT with KL anchoring (green lines) maintains consistently superior and stable performance across all coefficient values, demonstrating that while SFT loss provides useful anchoring, KL divergence offers more robust distributional control, particularly for tasks requiring significant distribution shift from the base model.

\begin{figure}[t]
    \centering
    \includegraphics[width=0.95\textwidth]{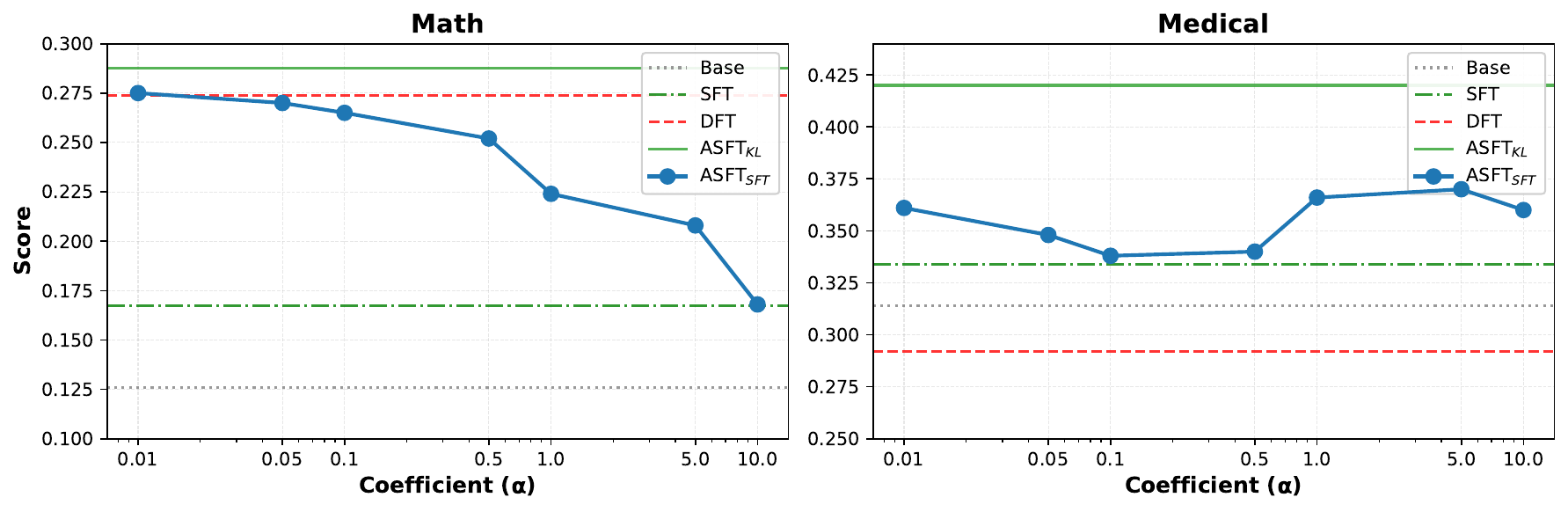}
    \caption{Comparison of KL-based anchoring (ASFT$_{KL}$) versus SFT loss-based anchoring (ASFT$_{SFT}$) across different coefficient values ($\alpha$ or $\lambda$) on math reasoning and medical knowledge tasks. ASFT$_{KL}$ maintains stable performance across all values, while ASFT$_{SFT}$ shows domain-specific sensitivity: degradation at large coefficients for math tasks and volatility for medical tasks.}
    \label{fig:kl-logp}
\end{figure}

\section{Conclusion}

We present Anchored Supervised Fine-Tuning (\mname), a principled approach that addresses the fundamental trade-off between supervised fine-tuning's efficiency and reinforcement learning's generalization. By grounding Dynamic Fine-Tuning in the reward-weighted regression framework, we show that DFT achieves tighter RL lower bounds than SFT but suffers from distributional drift. \mname resolves this through lightweight KL anchoring, preserving tightness while ensuring stability. Empirically, \mname consistently outperforms SFT and DFT across mathematical reasoning, medical knowledge injection, and code generation. The method achieves substantial improvements with minimal computational overhead, demonstrating that principled theoretical analysis can lead to both stronger guarantees and practical gains.

\clearpage
\newpage 

\section*{Acknowledgements}

This project was supported by National Natural Science Foundation of China (No. 62306132), Guangdong Basic and Applied Basic Research Foundation (No. 2025A1515011564), Natural Science Foundation of Shanghai (No. 25ZR1402136). We thank the anonymous reviewers for their insightful feedback on this work.

\section*{Ethics Statement}
All experiments in this work were conducted using publicly available datasets and standard benchmarks. The proposed Anchored Supervised Fine-Tuning (\mname) method is intended for research purposes to improve model generalization and stability. When deploying models trained with \mname in high-stakes applications, we recommend thorough validation, human oversight, and appropriate safeguards to ensure outputs are accurate, reliable, and aligned with ethical guidelines. We acknowledge potential risks of harmful or biased content and encourage the use of input filtering, output moderation, and ongoing monitoring to mitigate such risks. Our goal is to contribute to robust, efficient, and trustworthy AI systems that can be safely deployed.

\section*{Reproducibility Statement}

We provide comprehensive implementation details in Section~\ref{sec:exp}, including specific hyperparameters, datasets, and evaluation protocols. All experiments use publicly available datasets and standard benchmarks. We have released our implementation code to ensure reproducibility of our results. The code is available at \url{https://github.com/zhuchichi56/ASFT}.

\section*{Limitations}

We evaluate \mname on three domains—mathematical reasoning, medical knowledge, and code generation. Although results are consistent across these areas, broader evaluation across diverse task types and domains is needed to fully establish the method's generalizability.

Our theoretical analysis shows that DFT's effectiveness is fundamentally dependent on the prior knowledge embedded in the base model, which accounts for its strong performance in reasoning tasks with well-established priors, but weaker results in knowledge-intensive domains requiring more substantial distributional shifts. Although our KL-based anchoring mitigates DFT's instability, the method's reliance on model priors persists. Our ablation study on alternative anchoring mechanisms (Section~\ref{sec:alt-anchoring}) suggests that domain-adaptive anchoring strategies could enable more robust generalization across domains; however, developing a systematic framework for such adaptation remains an open challenge.

Additionally, \mname incurs higher computational costs than standard SFT due to the KL divergence computation, requiring doubled GPU memory (from 38.96GB to 88.02GB for LLaMA-2-7B) and ~23.7\% training time overhead. Our ASFT-LoRA variant (Section~\ref{sec:efficiency}) provides a practical solution by reducing memory to 40.70GB with only 7.3\% overhead, though with some performance trade-off.

Our work lacks systematic analysis of what capabilities or behaviors SFT may lose when transitioning to \mname. While we observe consistent improvements, a more comprehensive understanding of the trade-offs involved in our reweighting scheme would provide valuable insights for practitioners.

\section*{Future Work}

A promising direction for future research is extending \mname to continual learning settings, where models must adapt to new tasks while preserving previously acquired knowledge. The anchoring mechanism in \mname naturally aligns with continual learning goals: the KL regularization that prevents distributional drift during single-task fine-tuning could be adapted to prevent catastrophic forgetting when learning sequential tasks. By maintaining anchors to both the base model and checkpoints from previous tasks, \mname's framework could potentially enable stable, progressive capability acquisition across multiple domains.

\bibliography{iclr2026_conference}
\bibliographystyle{iclr2026_conference}

\appendix
\section{LLM Usage}
In the preparation of this paper, we only used large language models (LLMs) as an assistive tool for grammar correction and text polishing.

\section{Limitations}
We evaluate \mname on three domains—mathematical reasoning, medical knowledge, and code generation. Although results are consistent across these areas, broader evaluation across diverse task types and domains is needed to fully establish the method's generalizability.

Our work lacks systematic analysis of what capabilities or behaviors SFT may lose when transitioning to \mname. While we observe consistent improvements, a more comprehensive understanding of the trade-offs involved in our reweighting scheme would provide valuable insights for practitioners.

\section{Broader Impacts}

\mname offers a lightweight and computationally efficient alternative to standard supervised fine-tuning, requiring only KL regularization rather than complex reward modeling. This simplicity makes advanced fine-tuning techniques more accessible to researchers and practitioners with limited computational resources. The method shows potential as a practical path forward for improving language model post-training, particularly in specialized domains requiring both knowledge injection and reasoning capabilities. However, as with any fine-tuning approach, careful evaluation and validation remain essential when deploying models in high-stakes applications such as medical or educational settings.

\section{Theoretical Foundations}
\label{app:theoretical_foundations}

This appendix provides detailed derivations for the key theoretical results presented in Section~\ref{sec:preliminaries}.

\subsection{Derivation of SFT as RL Lower Bound}
\label{app:sft_bound}

We provide a complete proof of Proposition~\ref{prop:sft_bound}, following the theoretical framework established in \citep{qin2025iwsft}, showing that SFT optimizes a lower bound on the RL objective in sparse reward settings.

\begin{proof}[Proof of Proposition~\ref{prop:sft_bound}]
We start with the RL objective under sparse rewards $R(\tau) = \mathbb{I}[y = y^*]$:
\begin{equation}
J(\theta) = \mathbb{E}_{\tau \sim \pi_\theta}[R(\tau)] = \mathbb{E}_{\tau \sim \pi_\theta}[\mathbb{I}[y = y^*]]
\end{equation}

Since we only observe trajectories from the reference distribution $\pi_{\mathrm{ref}}$, we use importance sampling to rewrite this expectation. Under the assumption $\mathrm{supp}(\pi_\theta) \subseteq \mathrm{supp}(\pi_{\mathrm{ref}})$:
\begin{equation}
J(\theta) = \mathbb{E}_{\tau \sim \pi_{\mathrm{ref}}}\left[\frac{\pi_\theta(\tau)}{\pi_{\mathrm{ref}}(\tau)} \mathbb{I}[y = y^*]\right]
\end{equation}

Now we apply the fundamental inequality $u \geq 1 + \log u$ for all $u > 0$. Setting $u = \frac{\pi_\theta(\tau)}{\pi_{\mathrm{ref}}(\tau)}$:
\begin{equation}
\frac{\pi_\theta(\tau)}{\pi_{\mathrm{ref}}(\tau)} \geq 1 + \log \frac{\pi_\theta(\tau)}{\pi_{\mathrm{ref}}(\tau)} = 1 + \log \pi_\theta(\tau) - \log \pi_{\mathrm{ref}}(\tau)
\end{equation}

Substituting this into our importance-sampled expression:
\begin{align}
J(\theta) &\geq \mathbb{E}_{\tau \sim \pi_{\mathrm{ref}}}\left[\left(1 + \log \pi_\theta(\tau) - \log \pi_{\mathrm{ref}}(\tau)\right) \mathbb{I}[y = y^*]\right] \\
&= \mathbb{E}_{\tau \sim \pi_{\mathrm{ref}}}[\mathbb{I}[y = y^*]] + \mathbb{E}_{\tau \sim \pi_{\mathrm{ref}}}[\mathbb{I}[y = y^*] \log \pi_\theta(\tau)] \\
&\quad - \mathbb{E}_{\tau \sim \pi_{\mathrm{ref}}}[\mathbb{I}[y = y^*] \log \pi_{\mathrm{ref}}(\tau)]
\end{align}

The first and third terms are constants independent of $\theta$. Let $c_{\mathrm{ref}} = \mathbb{E}_{\tau \sim \pi_{\mathrm{ref}}}[\mathbb{I}[y = y^*]] = \mathbb{P}_{\pi_{\mathrm{ref}}}(\tau \in D^+)$. Then:
\begin{equation}
J(\theta) \geq c_{\mathrm{ref}} + \mathbb{E}_{\tau \sim \pi_{\mathrm{ref}}}[\mathbb{I}[y = y^*] \log \pi_\theta(\tau)] + \text{const}
\end{equation}

The expectation over indicator-weighted trajectories can be rewritten as an expectation over the filtered dataset $D^+ = \{(x,y^*) \mid R(x,y^*) = 1\}$:
\begin{equation}
\mathbb{E}_{\tau \sim \pi_{\mathrm{ref}}}[\mathbb{I}[y = y^*] \log \pi_\theta(\tau)] = c_{\mathrm{ref}} \mathbb{E}_{\tau \in D^+}[\log \pi_\theta(\tau)]
\end{equation}

Dropping the constant terms, we obtain:
\begin{equation}
J(\theta) \geq c_{\mathrm{ref}} \mathbb{E}_{\tau \in D^+}[\log \pi_\theta(\tau)]
\end{equation}

This completes the proof. Note that the right-hand side is precisely the SFT objective (up to scaling), establishing that SFT optimizes a lower bound on the RL objective.
\end{proof}

\subsection{Frozen Auxiliary Distribution and Optimization Validity}
\label{app:frozen_q}

We formalize how DFT's stop-gradient mechanism ensures valid lower bound optimization by treating the auxiliary distribution as constant during each update step.

\begin{lemma}[Frozen-q Surrogate]
\label{lem:frozen_q}
At iteration $k$, define the auxiliary distribution
\begin{equation}
q_k(\tau) \propto \pi_{\mathrm{ref}}(\tau \mid D^+) \, p_{\theta_k}(\tau)
\end{equation}
where $\theta_k$ is the current parameter. Then optimizing
\begin{equation}
\max_\theta \mathbb{E}_{\tau \in D^+}\left[\frac{q_k(\tau)}{\pi_{\mathrm{ref}}(\tau)} \log \pi_\theta(\tau)\right]
\end{equation}
with $q_k$ treated as constant (independent of $\theta$) yields a valid lower bound on $J(\theta)$ that differs only by a $\theta$-independent constant.
\end{lemma}

\begin{proof}
Following the derivation in Appendix~\ref{app:iw_derivation}, we have:
\begin{multline}
J(\theta) \geq
  \mathbb{E}_{\tau \sim \pi_{\mathrm{ref}}} \left[ \frac{q_k(\tau)}{\pi_{\mathrm{ref}}(\tau)} R(\tau) \right]
  + \mathbb{E}_{\tau \sim \pi_{\mathrm{ref}}} \left[ \frac{q_k(\tau)}{\pi_{\mathrm{ref}}(\tau)} R(\tau) \log \pi_\theta(\tau) \right] \\
  - \mathbb{E}_{\tau \sim \pi_{\mathrm{ref}}} \left[ \frac{q_k(\tau)}{\pi_{\mathrm{ref}}(\tau)} R(\tau) \log q_k(\tau) \right]
\end{multline}

When $q_k$ is constructed from frozen parameter $\theta_k$ (via stop-gradient), the first and third terms become constants independent of the optimization variable $\theta$. Therefore, maximizing the second term is equivalent to maximizing a valid lower bound on $J(\theta)$.

The stop-gradient operator $\operatorname{sg}[\cdot]$ in DFT implements this frozen-$q$ mechanism: at each iteration, $q_k$ depends on $\theta_k$ from the previous step, but is treated as fixed when computing gradients with respect to the current $\theta$.
\end{proof}

This formalization clarifies that the "dropped constant terms" in our derivations are legitimate because they are genuinely $\theta$-independent within each optimization iteration, making DFT a valid iterative lower bound maximization procedure.

\subsection{From Importance Sampling to Importance-Weighted Lower Bounds}
\label{app:iw_derivation}

We now derive the generalized importance-weighted framework that enables tighter bounds through auxiliary distributions.

Starting from the importance-sampled RL objective:
\begin{equation}
J(\theta) = \mathbb{E}_{\tau \sim \pi_{\mathrm{ref}}}\left[\frac{\pi_\theta(\tau)}{\pi_{\mathrm{ref}}(\tau)} R(\tau)\right]
\end{equation}

We introduce an auxiliary distribution $q(\tau)$ with $\mathrm{supp}(q) \supseteq \mathrm{supp}(\pi_\theta)$ and rewrite the importance ratio:
\begin{equation}
J(\theta) = \mathbb{E}_{\tau \sim \pi_{\mathrm{ref}}}\left[\frac{q(\tau)}{\pi_{\mathrm{ref}}(\tau)} \cdot \frac{\pi_\theta(\tau)}{q(\tau)} R(\tau)\right]
\end{equation}

Now we apply the inequality $u \geq 1 + \log u$ to the ratio $\frac{\pi_\theta(\tau)}{q(\tau)}$:
\begin{equation}
\frac{\pi_\theta(\tau)}{q(\tau)} \geq 1 + \log \frac{\pi_\theta(\tau)}{q(\tau)} = 1 + \log \pi_\theta(\tau) - \log q(\tau)
\end{equation}

Substituting this bound:
\begin{align}
J(\theta) &\geq \mathbb{E}_{\tau \sim \pi_{\mathrm{ref}}}\left[\frac{q(\tau)}{\pi_{\mathrm{ref}}(\tau)} \left(1 + \log \pi_\theta(\tau) - \log q(\tau)\right) R(\tau)\right] \\
&= \mathbb{E}_{\tau \sim \pi_{\mathrm{ref}}}\left[\frac{q(\tau)}{\pi_{\mathrm{ref}}(\tau)} R(\tau)\right] + \mathbb{E}_{\tau \sim \pi_{\mathrm{ref}}}\left[\frac{q(\tau)}{\pi_{\mathrm{ref}}(\tau)} R(\tau) \log \pi_\theta(\tau)\right] \\
&\quad - \mathbb{E}_{\tau \sim \pi_{\mathrm{ref}}}\left[\frac{q(\tau)}{\pi_{\mathrm{ref}}(\tau)} R(\tau) \log q(\tau)\right]
\end{align}

When treating $q$ as fixed (independent of $\theta$, see Lemma~\ref{lem:frozen_q}), the first and third terms are constants, so we can drop them from the optimization objective:
\begin{equation}
J(\theta) \geq \mathbb{E}_{\tau \sim \pi_{\mathrm{ref}}}\left[\frac{q(\tau)}{\pi_{\mathrm{ref}}(\tau)} R(\tau) \log \pi_\theta(\tau)\right] + \text{const}
\end{equation}

For sparse rewards $R(\tau) = \mathbb{I}[y = y^*]$, this reduces to:
\begin{equation}
J(\theta) \geq \mathbb{E}_{\tau \sim \pi_{\mathrm{ref}}}\left[\frac{q(\tau)}{\pi_{\mathrm{ref}}(\tau)} \mathbb{I}[y = y^*] \log \pi_\theta(\tau)\right] + \text{const}
\end{equation}

Converting to an expectation over the filtered dataset $D^+$:
\begin{equation}
J(\theta) \geq c_{\mathrm{ref}} \mathbb{E}_{\tau \in D^+}\left[\frac{q(\tau)}{\pi_{\mathrm{ref}}(\tau)} \log \pi_\theta(\tau)\right] + \text{const}
\end{equation}

where $c_{\mathrm{ref}} = \mathbb{P}_{\pi_{\mathrm{ref}}}(\tau \in D^+)$.

\textbf{Key Insights:}
\begin{enumerate}
\item When $q(\tau) = \pi_{\mathrm{ref}}(\tau)$, we recover the standard SFT bound from Proposition~\ref{prop:sft_bound}.
\item As $q(\tau) \to \pi_\theta(\tau)$, the bound becomes tighter since the inequality $\frac{\pi_\theta(\tau)}{q(\tau)} \geq 1 + \log \frac{\pi_\theta(\tau)}{q(\tau)}$ approaches equality.
\item The choice of $q$ involves a fundamental trade-off: tighter bounds (by making $q$ closer to $\pi_\theta$) versus stability (by keeping $q$ close to $\pi_{\mathrm{ref}}$).
\end{enumerate}

This theoretical framework provides the foundation for both existing methods like DFT and our proposed \mname approach, which aims to achieve tight bounds while maintaining optimization stability through principled anchoring mechanisms.



\subsection{Proof of Tightness}
\label{app:tightness_proof}

We provide the detailed proof that the DFT auxiliary distribution $q$ yields a strictly tighter lower bound than standard SFT whenever the policy distribution is non-degenerate.

\begin{proof}[Proof of Theorem~\ref{thm:tightness}]
Let $X = p_\theta(\tau)$ with $\tau \sim \pi_{\mathrm{ref}}(\cdot \mid D^+)$, and $f(x) = \log x$. We compare:
\begin{align}
B_{\mathrm{SFT}} = c_{\mathrm{ref}} \mathbb{E}[f(X)], \qquad B_{\mathrm{DFT}} = c_{\mathrm{ref}} \frac{\mathbb{E}[X f(X)]}{\mathbb{E}[X]}
\end{align}

The difference is:
\begin{align}
B_{\mathrm{DFT}} - B_{\mathrm{SFT}} &= c_{\mathrm{ref}} \left(\frac{\mathbb{E}[X f(X)]}{\mathbb{E}[X]} - \mathbb{E}[f(X)]\right)\\
&= \frac{c_{\mathrm{ref}}}{\mathbb{E}[X]} \big(\mathbb{E}[X f(X)] - \mathbb{E}[X]\mathbb{E}[f(X)]\big)\\
&= \frac{c_{\mathrm{ref}}}{\mathbb{E}[X]} \mathrm{Cov}(X, f(X))
\end{align}

Since $f(x) = \log x$ is strictly increasing on $(0, 1]$, variables $X$ and $f(X)$ are comonotone, yielding $\mathrm{Cov}(X, f(X)) \geq 0$ with equality iff $X$ is constant. Therefore $B_{\mathrm{DFT}} \geq B_{\mathrm{SFT}}$, with strict inequality when $\mathrm{Var}(X) > 0$.
\end{proof}

\clearpage
\newpage

\section{Experiments}

\subsection{Model Scale Results}
\label{apendix:model-scale-results}

\begin{table*}[htbp]
    \centering
    \vspace{1em}
    \small
    \renewcommand{\arraystretch}{1.1}
    \setlength{\tabcolsep}{8pt}
    \begin{tabular}{l l|c c c c}
        \toprule
        \textbf{Model} & \textbf{Methods} & \textbf{MedQA} & \textbf{MMLU} & \textbf{MedMCQA} & \textbf{Avg.} \\
        \midrule
        \multirow{8}{*}{LLaMA-2-7B}
            & \texttt{Base}       & 29.85 & 30.52 & 33.76 & 31.38 \\
            & \texttt{SFT}        & 33.31 \textcolor{orange}{\scriptsize $\uparrow$3.46} & 33.52 \textcolor{orange}{\scriptsize $\uparrow$3.00} & 33.28 \textcolor{teal}{\scriptsize $\downarrow$0.48} & 33.37 \textcolor{orange}{\scriptsize $\uparrow$1.99} \\
            & \texttt{SFT w KL}   & 29.22 \textcolor{teal}{\scriptsize $\downarrow$0.63} & 30.63 \textcolor{orange}{\scriptsize $\uparrow$0.11} & 33.01 \textcolor{teal}{\scriptsize $\downarrow$0.75} & 30.95 \textcolor{teal}{\scriptsize $\downarrow$0.43} \\
            & \texttt{iw-SFT}     & 35.35 \textcolor{orange}{\scriptsize $\uparrow$5.50} & 38.92 \textcolor{orange}{\scriptsize $\uparrow$8.40} & 34.74 \textcolor{orange}{\scriptsize $\uparrow$0.98} & 36.34 \textcolor{orange}{\scriptsize $\uparrow$4.96} \\
            & \texttt{DFT}        & 29.69 \textcolor{teal}{\scriptsize $\downarrow$0.16} & 26.69 \textcolor{teal}{\scriptsize $\downarrow$3.83} & 31.20 \textcolor{teal}{\scriptsize $\downarrow$2.56} & 29.19 \textcolor{teal}{\scriptsize $\downarrow$2.19} \\
            & \texttt{GRPO}       & 30.48 \textcolor{orange}{\scriptsize $\uparrow$0.63} & 32.46 \textcolor{orange}{\scriptsize $\uparrow$1.94} & 34.64 \textcolor{orange}{\scriptsize $\uparrow$0.88} & 32.53 \textcolor{orange}{\scriptsize $\uparrow$1.15} \\
            & \texttt{DAPO}       & \textbf{39.75} \textcolor{orange}{\scriptsize $\uparrow$9.90} & \textbf{48.63} \textcolor{orange}{\scriptsize $\uparrow$18.11} & 38.37 \textcolor{orange}{\scriptsize $\uparrow$4.61} & \textbf{42.25} \textcolor{orange}{\scriptsize $\uparrow$10.87} \\
            \rowcolor{blue!10} & \textbf{\texttt{ASFT}} & 39.28 \textcolor{orange}{\scriptsize $\uparrow$9.43} & 46.37 \textcolor{orange}{\scriptsize $\uparrow$15.85} & \textbf{40.45} \textcolor{orange}{\scriptsize $\uparrow$6.69} & 42.03 \textcolor{orange}{\scriptsize $\uparrow$10.65} \\
        \midrule
        \multirow{4}{*}{LLaMA-2-70B}
            & \texttt{Base}       & 47.37 & 65.32 & 47.21 & 53.30  \\
            & \texttt{DFT}        & 36.84 \textcolor{teal}{\scriptsize $\downarrow$10.53} & 46.77 \textcolor{teal}{\scriptsize $\downarrow$18.55} & 42.39 \textcolor{teal}{\scriptsize $\downarrow$4.82} & 42.00 \textcolor{teal}{\scriptsize $\downarrow$11.30} \\
            & \texttt{SFT}        & 41.24 \textcolor{teal}{\scriptsize $\downarrow$6.13} & 47.02 \textcolor{teal}{\scriptsize $\downarrow$18.30} & 36.43 \textcolor{teal}{\scriptsize $\downarrow$10.78} & 41.56 \textcolor{teal}{\scriptsize $\downarrow$11.74} \\
            \rowcolor{blue!10} & \textbf{\texttt{ASFT}} & \textbf{49.57} \textcolor{orange}{\scriptsize $\uparrow$2.20} & \textbf{65.86} \textcolor{orange}{\scriptsize $\uparrow$0.54} & \textbf{50.61} \textcolor{orange}{\scriptsize $\uparrow$3.40} & \textbf{55.35} \textcolor{orange}{\scriptsize $\uparrow$2.05} \\
        \midrule
        \multirow{7}{*}{Qwen2.5-7B}
            & \texttt{Base}       & 62.53 & 74.84 & 58.45 & 65.27 \\
            & \texttt{SFT}        & 59.07 \textcolor{teal}{\scriptsize $\downarrow$3.46} & 71.19 \textcolor{teal}{\scriptsize $\downarrow$3.65} & 49.80 \textcolor{teal}{\scriptsize $\downarrow$8.65} & 60.02 \textcolor{teal}{\scriptsize $\downarrow$5.25} \\
            & \texttt{SFT w KL}   & 61.12 \textcolor{teal}{\scriptsize $\downarrow$1.41} & 69.77 \textcolor{teal}{\scriptsize $\downarrow$5.07} & 51.52 \textcolor{teal}{\scriptsize $\downarrow$6.93} & 60.80 \textcolor{teal}{\scriptsize $\downarrow$4.47} \\
            & \texttt{DFT}        & 36.45 \textcolor{teal}{\scriptsize $\downarrow$26.08} & 54.65 \textcolor{teal}{\scriptsize $\downarrow$20.19} & 45.90 \textcolor{teal}{\scriptsize $\downarrow$12.55} & 45.67 \textcolor{teal}{\scriptsize $\downarrow$19.60} \\
            & \texttt{GRPO}       & \textbf{63.00} \textcolor{orange}{\scriptsize $\uparrow$0.47} & 76.16 \textcolor{orange}{\scriptsize $\uparrow$1.32} & 58.93 \textcolor{orange}{\scriptsize $\uparrow$0.48} & 66.03 \textcolor{orange}{\scriptsize $\uparrow$0.76} \\
            & \texttt{DAPO}       & 63.94 \textcolor{orange}{\scriptsize $\uparrow$1.41} & 73.82 \textcolor{teal}{\scriptsize $\downarrow$1.02} & 58.88 \textcolor{orange}{\scriptsize $\uparrow$0.43} & 65.55 \textcolor{orange}{\scriptsize $\uparrow$0.28} \\
            \rowcolor{blue!10} & \textbf{\texttt{ASFT}} & 61.98 \textcolor{teal}{\scriptsize $\downarrow$0.55} & \textbf{76.60} \textcolor{orange}{\scriptsize $\uparrow$1.76} & \textbf{59.00} \textcolor{orange}{\scriptsize $\uparrow$0.55} & \textbf{65.86} \textcolor{orange}{\scriptsize $\uparrow$0.59} \\
        \midrule
        \multirow{4}{*}{Qwen2.5-32B}
            & \texttt{Base}       & 61.90 & 62.50 & 45.09 & 56.50 \\
            & \texttt{SFT}        & 67.09 \textcolor{orange}{\scriptsize $\uparrow$5.19} & 76.12 \textcolor{orange}{\scriptsize $\uparrow$13.62} & 61.20 \textcolor{orange}{\scriptsize $\uparrow$16.11} & 68.14 \textcolor{orange}{\scriptsize $\uparrow$11.64} \\
            & \texttt{DFT}        & 68.42 \textcolor{orange}{\scriptsize $\uparrow$6.52} & 57.03 \textcolor{teal}{\scriptsize $\downarrow$5.47} & 42.51 \textcolor{teal}{\scriptsize $\downarrow$2.58} & 55.99 \textcolor{teal}{\scriptsize $\downarrow$0.51} \\
            \rowcolor{blue!10} & \textbf{\texttt{ASFT}} & \textbf{71.80} \textcolor{orange}{\scriptsize $\uparrow$9.90} & \textbf{79.23} \textcolor{orange}{\scriptsize $\uparrow$16.73} & \textbf{58.76} \textcolor{orange}{\scriptsize $\uparrow$13.67} & \textbf{69.93} \textcolor{orange}{\scriptsize $\uparrow$13.43} \\
        \midrule
        \multirow{4}{*}{Qwen2.5-72B}
            & \texttt{Base}       & 75.57 & 78.28 & 64.91 & 72.92 \\
            & \texttt{SFT}        & 28.04 \textcolor{teal}{\scriptsize $\downarrow$47.53} & 34.57 \textcolor{teal}{\scriptsize $\downarrow$43.71} & 38.92 \textcolor{teal}{\scriptsize $\downarrow$25.99} & 33.84 \textcolor{teal}{\scriptsize $\downarrow$39.08} \\
            & \texttt{DFT}        & 71.01 \textcolor{teal}{\scriptsize $\downarrow$4.56} & 79.04 \textcolor{orange}{\scriptsize $\uparrow$0.76} & 64.69 \textcolor{teal}{\scriptsize $\downarrow$0.22} & 71.58 \textcolor{teal}{\scriptsize $\downarrow$1.34} \\
            \rowcolor{blue!10} & \textbf{\texttt{ASFT}} & \textbf{77.69} \textcolor{orange}{\scriptsize $\uparrow$2.12} & \textbf{87.08} \textcolor{orange}{\scriptsize $\uparrow$8.80} & \textbf{71.00} \textcolor{orange}{\scriptsize $\uparrow$6.09} & \textbf{78.59} \textcolor{orange}{\scriptsize $\uparrow$5.67} \\
        \bottomrule
    \end{tabular}
    \caption{Performance of various fine-tuning methods on medical domain datasets for both LLaMA-2 and Qwen2.5 series. \textbf{Bold} numbers indicate the best performance in each group, and rows with \colorbox{blue!10}{blue background} highlight our \texttt{ASFT} approach. Arrows with \textcolor{orange}{$\uparrow$} and \textcolor{teal}{$\downarrow$} indicate performance improvements and degradations relative to the base model. All values are reported in percentages.}
    \label{tab:med-model-scale-percent}
\end{table*}

\clearpage
\newpage



\subsection{Training Hyper-Parameters}
\label{app:bs_lr}
\begin{figure}[tbhp]
    \centering
    \includegraphics[width=1\textwidth]{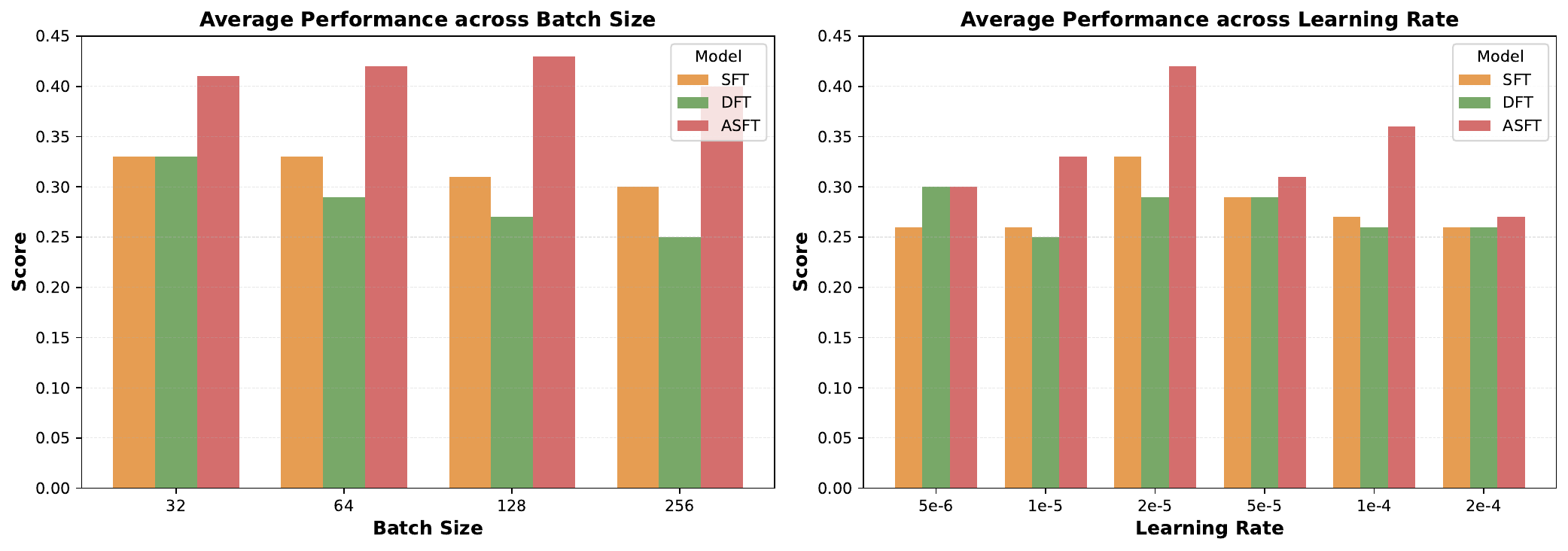}
    \caption{Ablation study of \mname on MedCAQA (10k) using LLaMA-2-7B. \textbf{Right:} Effect of batch size, indicating stable performance across a wide range (32–256). \mname consistently outperforms SFT and DFT under all settings. \textbf{Left:} Effect of different learning rates on average performance, showing that intermediate rates (1e-5 and 1e-4) achieve the best results. }
    \label{fig:bs_lr}
\end{figure}

\clearpage
\newpage

\section{Dataset Example}

\subsection{Medical Knowledge Datasets}

\begin{figure*}[htbp]
    \centering
    \begin{tcolorbox}[title=Medical Knowledge Training Set]
    \small
    \textbf{Instruction:}\\
    Disulfiram is a type of:-\\
    A. Aversion therapy\\
    B. Anticraving therapy\\
    C. Detoxification\\
    D. Opioid management therapy

    \vspace{0.6em}
    \textbf{Response:}\\
    Deterrent agents/Aversive agents Disulfiram (tetraethyl thiuram disulfide) When alcohol is ingested by a person who is on disulfiram, alcohol-derived acetaldehyde cannot be oxidized to acetate and this leads to an accumulation of acetaldehyde in blood. This causes the impoant disulfiram-ethanol reaction (DER) characterized by flushing, tachycardia, hypotension, tachypnoea, palpitations, headache, sweating, nausea, vomiting, giddiness and a sense of impending doom associated with severe anxiety. When we try to decrease the frequency of behavior because it can cause a really bad consequence It will be called as an aversion or Aversive therapy. Anti-craving agents Acamprosate, naltrexone and SSRIs (such as fluoxetine) are among the medications tried as anti-craving agents in alcohol dependence. Detoxification Detoxification is the treatment of alcohol withdrawal symptoms, i.e. symptoms produced by the removal of the 'toxin' (alcohol). The drugs of choice for detoxification are usually benzodiazepines.

    So the answer to this question is A
    \end{tcolorbox}
    \caption{An example from the medical knowledge training set illustrating an instruction-response pair on alcohol dependence therapy}
    \label{fig:medical_train}
\end{figure*}

\begin{figure*}[htbp]
    \centering
    \begin{tcolorbox}[title=Medical Knowledge Test Set]
    \small
    \textbf{Question:}\\
    Which of the following is not true for myelinated nerve fibers: 

    \vspace{0.3em}
    \textbf{Options:}\\
    A. Impulse through myelinated fibers is slower than non-myelinated fibers\\
    B. Membrane currents are generated at nodes of Ranvier\\
    C. Saltatory conduction of impulses is seen\\
    D. Local anesthesia is effective only when the nerve is not covered by myelin sheath

    \vspace{0.3em}
    \textbf{Answer:} A
    \end{tcolorbox}
    \caption{An example from the medical knowledge test set illustrating a multiple-choice question on physiology (myelinated nerve fibers)}
    \label{fig:medical_test_myelinated}
\end{figure*}

\clearpage
\newpage

\subsection{Math Reasoning Datasets}

\begin{figure*}[htbp]
    \centering
    \begin{tcolorbox}[title=Math Reasoning Training Set]
    \small
    \textbf{Instruction:}\\
    Let $a$, $b$, and $c$ be the roots of the equation $x^3 - 2x - 5 = 0$. Find $\frac{1}{a-2} + \frac{1}{b-2} + \frac{1}{c-2}$.

    \vspace{0.6em}
    \textbf{Response:}\\
    First, substitute $x = z + 2$ to transform the polynomial so that $z+2$ are the new roots, i.e., $a-2$, $b-2$, and $c-2$ are the roots for $z$. We then have:
    \[
    (z+2)^3 - 2(z+2) - 5 = z^3 + 6z^2 + 12z + 8 - 2z - 4 - 5 = z^3 + 6z^2 + 10z - 1
    \]
    We are interested in $\frac{1}{a-2} + \frac{1}{b-2} + \frac{1}{c-2}$, which are the reciprocals of the roots of $z^3 + 6z^2 + 10z - 1$. By Vieta’s formulas, the sum of the reciprocals of the roots of a polynomial $z^3 + pz^2 + qz + r$ is $-q/r$. For our polynomial, $p=6$, $q=10$, and $r=-1$, so:
    \[
    \frac{1}{a-2} + \frac{1}{b-2} + \frac{1}{c-2} = -\frac{q}{r} = -\frac{10}{-1} = \boxed{10}
    \]
    \end{tcolorbox}
    \caption{An example from the math reasoning training set illustrating an instruction-response pair on algebraic root manipulation}
    \label{fig:math_training_root}
\end{figure*}

\begin{figure*}[htbp]
    \centering
    \begin{tcolorbox}[title=Math Reasoning Test Set]
    \small
    \textbf{Question:}\\
    What is the smallest positive perfect cube that can be written as the sum of three consecutive integers?

    \vspace{0.6em}
    \textbf{Answer:}\\
    27

    \vspace{0.6em}
    \textbf{Solution:}\\
    The sum of three consecutive integers takes the form $(k-1)+(k)+(k+1)=3k$ and hence is a multiple of 3. Conversely, if a number $n$ is a multiple of 3, then $n/3-1$, $n/3$, and $n/3+1$ are three consecutive integers that sum to give $n$. Therefore, a number is a sum of three consecutive integers if and only if it is a multiple of 3. The smallest positive perfect cube that is a multiple of 3 is $3^3=\boxed{27}$.
    \end{tcolorbox}
    \caption{An example from the math reasoning test set illustrating a problem with its solution on sums of consecutive integers}
    \label{fig:math_test_consecutive_cube}
\end{figure*}

\end{document}